\documentclass[10pt]{article}
\usepackage[preprint]{tmlr}
\usepackage{amsmath,amssymb,amsthm,bm,mathtools}
\usepackage{booktabs}
\usepackage{array,tabularx}
\usepackage{graphicx}
\usepackage{xcolor}
\usepackage[hypertexnames=false]{hyperref}
\hypersetup{hidelinks}
\usepackage{algorithm}
\usepackage{algpseudocode}
\usepackage{float}
\usepackage{enumitem}
\usepackage{microtype}

\newcommand{\inner}[2]{\left\langle #1,#2\right\rangle}
\newcommand{\norm}[1]{\left\lVert #1\right\rVert}

\title{Optimizer Memory Makes Shuffle Order\\ a First-Order Source of Fine-Tuning Noise}
\author{\name John Sweeney \email john.sweeney@sideplane.ai \\ \addr Sideplane.ai}
\newtheorem{theorem}{Theorem}
\newtheorem{proposition}{Proposition}

\theoremstyle{definition}
\newtheorem{definition}{Definition}
\newtheorem{assumption}{Assumption}

\begin{document}
\maketitle
\newcommand\blfootnote[1]{\begingroup\renewcommand\thefootnote{}\footnote{#1}\addtocounter{footnote}{-1}\endgroup}
\blfootnote{An LLM-based tool was used for copyediting, wording, and writing experiment and analysis code. The study design, claims, and results are the authors' own, verified by the authors.}

\begin{abstract}
Shuffle order can be a larger source of fine-tuning noise than a memoryless analysis predicts: fixed-clock optimizer memory makes local equal-multiset contrasts first order in the learning rate rather than second order, and the resulting order channel can be large enough for a single seed to flip a close A/B comparison. We isolate this mechanism and derive a fit-free way to size the noise it produces. For a memoryless optimizer, reordering an equal multiset has no first-order endpoint term; the leading local contrast is the $O(\eta^2)$ gradient bracket. Fixed-clock optimizers such as AdamW are different. Their moment buffers, preconditioner state, and de-biasing counters advance with the step index rather than with the learning-rate-scaled time $\tau=\eta k$, so the same gradient can receive a position-dependent endpoint weight. For any fixed finite measurement window, a lifted-state expansion gives an $O(\eta)$ equal-multiset contrast whenever the first-order replay coefficient is nonzero, while regular and clock-matched controls remain $O(\eta^2)$; a bare fixed-$\beta$ momentum buffer is already enough. A bitwise-deterministic replay from one warmed optimizer state isolates the mechanism, giving order-variance slopes $1.83$ for AdamW, $2.00$ for fixed-$\beta$ momentum, and $4.00$ for SGD; matching the memory clock to $\tau$ restores the regular exponent. For AdamW with a frozen preconditioner, the same impulse-weight kernel gives a closed-form asymptotic order-variance floor after the local potentials are measured, with no fitted coefficients. The result is local to the measurement window---independent reshuffling can average the channel across windows---but it yields order-noise error bars, positional attribution weights, and a seed-budget criterion for fine-tuning comparisons.
\end{abstract}

\section{Introduction}
\label{sec:intro}

How much does the order of fine-tuning data change where training ends up?  For a memoryless gradient step, very little: reordering an equal multiset leaves the first-order term unchanged, so the contrast is second order in the learning rate and the gradient-bracket residual \citep{dherin2023implicit,rukhovich2025commute,sweeney2026liebracket} cancels under reshuffling.  The optimizers used in practice are not memoryless.  Their moment buffers and de-biasing counters advance with position, so the same gradient gets a position-dependent weight and the contrast no longer cancels at first order.  Within this local window, fixed-clock state creates an $O(\eta)$ order channel rather than only an $O(\eta^2)$ one; when that channel is large relative to the gap between two configurations, it can flip a close comparison, and the same expansion that predicts it also tells you how to size it.

AdamW-style training changes the state being expanded.  In the small-$\eta$ limit over a fixed window, the parameters remain close to the measurement state, so the gradients are effectively frozen, but the moment buffers, preconditioner state, and de-biasing counters still advance with position.  The same gradient can therefore receive different first-order endpoint weights depending on where it appears.  For plain momentum the picture is concrete: an early gradient persists in the buffer for more steps than a late one, so position alone changes its contribution to the endpoint.

We study this local finite-window limit at a fixed measurement state.  Regular optimizers have flat first-order impulse weights and $O(\eta^2)$ equal-multiset contrasts.  Fixed-clock optimizer state can have non-flat impulse weights and $O(\eta)$ contrasts whenever the first-order replay coefficient is nonzero.  Here ``first-order'' means order $\eta$ in the step-size expansion.  Optimizer state therefore moves equal-multiset order from the memoryless $O(\eta^2)$ bracket regime to an $O(\eta)$ channel, even when the finite-window contrast is small in absolute magnitude.  A fixed-clock buffer is sufficient for this shift; AdamW is the main adaptive optimizer we analyze.

The same impulse-weight kernel has two practical uses.  It sets a local order-noise scale for sizing shuffle-seed comparisons, and it gives a positional attribution factor: a block's influence is its gradient alignment times the finite-clock weight of the position where it appears.  Clock matching is the corresponding control: if the memory and normalization clocks are scaled with the continuous-time training variable $\tau=\eta k$, the first-order impulse profile flattens and the regular $O(\eta^2)$ exponent is restored.

\begin{figure}[t]
\centering
\includegraphics[width=0.82\linewidth]{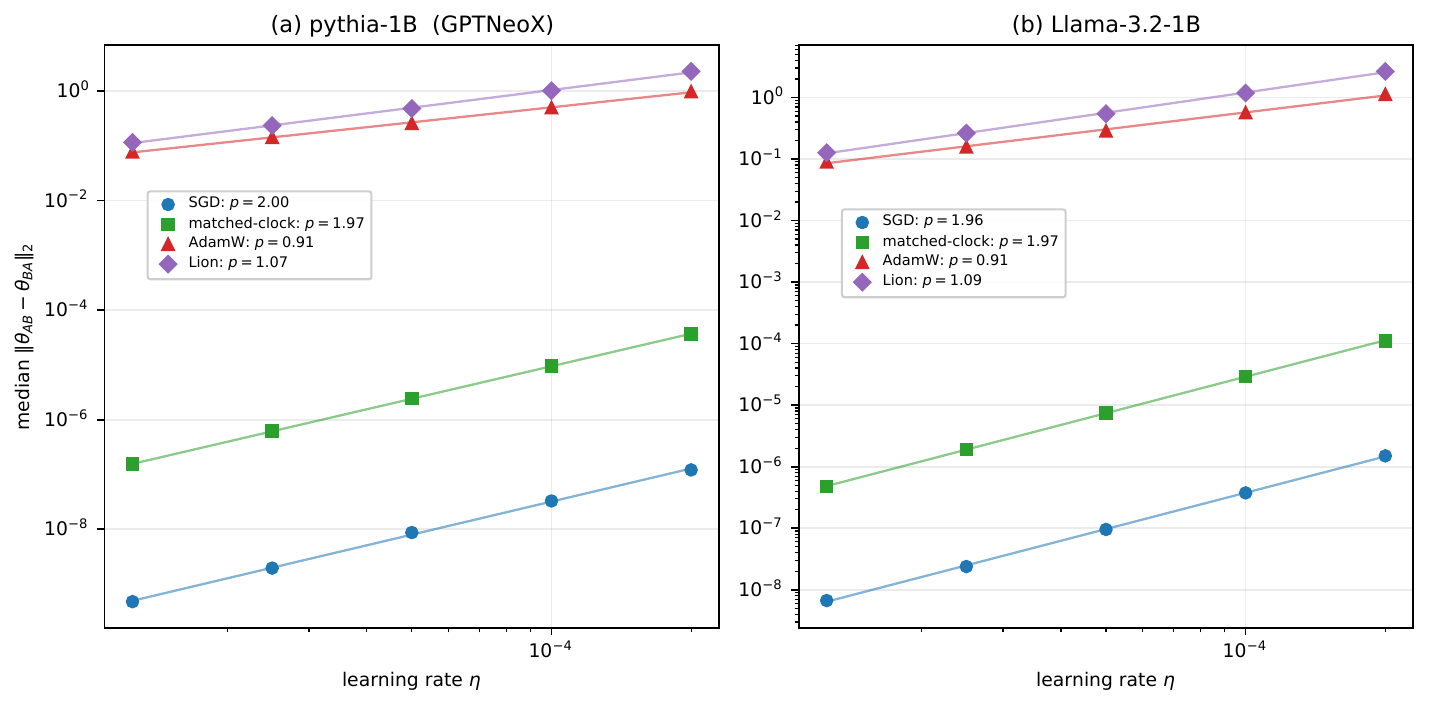}
\caption{Mean-level exponent split.  Equal-multiset order contrasts scale near $\eta^2$ for SGD and matched-clock memory, and near $\eta$ for fixed-clock AdamW/Lion.  The fixed-$\beta$ momentum control in Section~\ref{sec:experiments} isolates fixed-clock buffering from coordinate adaptivity.}
\label{fig:eta-slope}
\end{figure}

For domains $A,B$, window length $K$, and evaluation covector $g_E$, our basic readout is
\begin{equation}
\label{eq:balanced-observable}
    \Delta_E(A,B;K)=\inner{g_E}{\theta_{A^{K/2}B^{K/2}}-\theta_{B^{K/2}A^{K/2}}}.
\end{equation}
The theory gives the first coefficient of this contrast, the leading-order variance over uncontrolled permutations, and local validity diagnostics for the expansion.  Fine-tuning seed sensitivity motivates the measurement \citep{dodge2020finetuning,chen2024order}; we isolate the optimizer-controlled order channel inside that phenomenon.

Table~\ref{tab:scope} summarizes the main results, their mathematical objects, and the scope of evidence.

\begin{table}[t]
\centering
\small
\caption{Main results and scope.  The theorem-level result concerns fixed-clock optimizer state.  The closed-form floor specializes to frozen-preconditioner AdamW and is tested against full frozen-gradient replay, with scalar-sortability ($\rho_{\mathrm{curl}}$) reported as a coefficient-level diagnostic.}
\label{tab:scope}
\begin{tabularx}{\linewidth}{>{\raggedright\arraybackslash}p{0.25\linewidth}>{\raggedright\arraybackslash}p{0.26\linewidth}>{\raggedright\arraybackslash}X}
\toprule
Result & Main object & Scope and evidence \\
\midrule
Fixed-clock state changes the order exponent & Lifted-state replay & Finite window at a fixed measurement state; fixed-$\beta$ momentum and AdamW replay have non-flat impulse profiles. \\
AdamW variance floor & Frozen-$P$ kernel & Closed form for the leading variance term; ordering-orbit tests and $\chi$ compare it with full frozen-gradient replay. \\
Buffering is sufficient & Fixed-$\beta$ momentum & Non-adaptive control separates fixed-clock memory from coordinate adaptivity. \\
Scope diagnostics & $\rho,\chi,\rho_{\mathrm{curl}}$ & Local validity, frozen-$P$ scope, and scalar-sortability tests; large ratios mark cells outside the local regime. \\
Shuffle-seed budget & $\sigma_{\mathrm{ord}}$ & Power calculation for held-out comparisons with order noise. \\
\bottomrule
\end{tabularx}
\end{table}

\section{The lifted-state clock theorem}
\label{sec:clock-theorem}

Let a word $w=(w_1,\ldots,w_K)$ denote the sequence of domains in a finite local window.  The \emph{lifted state} is the pair $z=(\theta,\xi)$, where $\xi$ collects the optimizer variables in addition to the parameters.  We work at a measurement state and take $K$ fixed while $\eta$ varies.  The local question is: when the same domains are seen in a different order, what is the first nonzero coefficient of the resulting parameter difference?

\begin{definition}[Regular optimizer family]
\label{def:regular}
An optimizer family is regular if, for every domain $D$, its local state map satisfies
\begin{equation}
\label{eq:regular-map}
    F_D^\eta(z)=z+\eta X_D(z)+O(\eta^2)
\end{equation}
uniformly on a neighborhood of the measurement state, with $X_D$ independent of $\eta$.
\end{definition}

\begin{proposition}[Regular equal-multiset cancellation]
\label{prop:regular-commutator}
If $F_D^\eta$ is regular and $w,w'$ contain the same multiset of domains, then
\begin{equation}
    F_{w_K}^\eta\cdots F_{w_1}^\eta(z)-F_{w'_K}^\eta\cdots F_{w'_1}^\eta(z)=O(\eta^2).
\end{equation}
For a two-step swap, under the stronger second-order expansion $F_D^\eta=I+\eta X_D+\eta^2Y_D+O(\eta^3)$ of Appendix~\ref{app:proofs}, this leading term is the usual commutator $\eta^2[X_B,X_A]+O(\eta^3)$.
\end{proposition}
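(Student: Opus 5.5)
The plan is to expand the $K$-fold composition to first order by induction on the word length, and then observe that the $O(\eta)$ coefficient is a sum over letters and therefore depends only on the multiset.

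\textbf{First-order expansion.} Write $z_0=z$ and $z_j=F_{w_j}^\eta(z_{j-1})$. By Definition~\ref{def:regular}, $z_j=z_{j-1}+\eta X_{w_j}(z_{j-1})+O(\eta^2)$ uniformly on a neighborhood $U$ of the measurement state.

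\textbf{Staying in the neighborhood.} Since $K$ is fixed and each $X_D$ is bounded on $U$ (continuity on a compact sub-neighborhood suffices), induction gives $z_j-z=O(\eta)$ for all $j\le K$. In particular every $z_j$ remains in $U$ for $\eta$ small enough, so the uniform remainder bound applies at every step. This is the one place where uniformity is genuinely used. I expect it to be the only real obstacle: one needs $X_D$ locally Lipschitz (or at least continuous, which with uniform $O(\eta)$ displacement yields $o(1)$ rather than $O(\eta)$). I will assume $X_D\in C^1$, consistent with the second-order expansion used in the appendix.

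\textbf{Freezing the vector fields.} Lipschitz continuity gives $X_{w_j}(z_{j-1})=X_{w_j}(z)+O(\eta)$. Summing the steps,
\begin{equation*}
F_{w_K}^\eta\cdots F_{w_1}^\eta(z)=z+\eta\sum_{j=1}^K X_{w_j}(z)+O(\eta^2),
\end{equation*}
where the constant in $O(\eta^2)$ depends on $K$, which is fixed. The sum $\sum_j X_{w_j}(z)=\sum_D m_D X_D(z)$ depends only on the multiplicities $m_D$. It is therefore identical for $w$ and $w'$, and subtracting the two expansions gives the $O(\eta^2)$ claim.

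\textbf{Two-step swap.} Take $w=(A,B)$ and $w'=(B,A)$ and use $F_D^\eta=I+\eta X_D+\eta^2Y_D+O(\eta^3)$. Taylor-expand the outer map:
\begin{equation*}
F_B^\eta(F_A^\eta(z))=z+\eta X_A+\eta^2Y_A+\eta X_B(z+\eta X_A)+\eta^2Y_B+O(\eta^3),
\end{equation*}
and
\begin{equation*}
\eta X_B(z+\eta X_A)=\eta X_B+\eta^2 DX_B\,X_A+O(\eta^3).
\end{equation*}
Interchanging $A$ and $B$ and subtracting, the symmetric terms $X_A+X_B$ and $Y_A+Y_B$ cancel. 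What remains is
\begin{equation*}
\eta^2\bigl(DX_B\,X_A-DX_A\,X_B\bigr)+O(\eta^3).
\end{equation*}
This equals $\eta^2[X_B,X_A]$ up to the sign convention for the Lie bracket, which I will fix to match the appendix, namely $[X,Y]=DX\,Y-DY\,X$ in this ordering. The $O(\eta^3)$ term requires $X_D\in C^2$ and a uniform third-order remainder, by the same neighborhood argument as in the first-order case.
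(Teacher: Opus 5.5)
Your proposal is correct and follows the paper's own route. The first-order expansion of the $K$-fold composition is the flat sum $\eta\sum_t X_{w_t}(z)$, which depends only on the multiset, and subtracting the two-step expansions for $AB$ and $BA$ leaves $\eta^2(\mathrm{D}X_B[X_A]-\mathrm{D}X_A[X_B])$, matching the appendix's bracket convention. Your added $C^1$ (locally Lipschitz) hypothesis on $X_D$ is genuinely needed, because Definition~\ref{def:regular} does not impose it. For example, the exact maps with $X_A\equiv1$ and $X_B(z)=\sqrt{|z|}$ at $z=0$ satisfy that definition yet give an $AB/BA$ contrast of exactly $\eta^{3/2}$. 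The paper's sketch therefore relies on the same assumption implicitly.
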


\begin{proof}[Proof sketch]
Composing along $w$ gives $z_K(w)=z+\eta\sum_{t=1}^K X_{w_t}(z)+O(\eta^2)$.  The first-order term is a flat sum over the multiset: every generator has coefficient one, independent of position.  Equal multisets therefore cancel at first order.  Expanding two steps gives the displayed Lie bracket.
\end{proof}

An optimizer whose state advances on the step clock need not be regular in the sense of Definition~\ref{def:regular}.  For such fixed-clock state, the local maps are
\begin{equation}
\label{eq:lifted-map}
    \xi_t=G_{D_t}(\xi_{t-1},\theta_{t-1}),\qquad
    \theta_t=\theta_{t-1}+\eta\,U_{D_t}(\xi_t,\theta_{t-1})+O(\eta^2),
    \qquad t=1,\ldots,K.
\end{equation}
The defining feature is that $G_D$ does not converge to the identity with $\eta$: buffers and counters keep changing even when the parameter displacement is scaled down.  Counters are discrete coordinates; the smoothness assumptions below apply only to the continuous arguments of $G_D$ and $U_D$, while the counter advances on the step index rather than on rescaled time $\eta s$.  For a word $w$, define its frozen-parameter optimizer-state replay by
\begin{equation}
\label{eq:frozen-replay-xi}
    \xi^0_t(w)=G_{w_t}(\xi^0_{t-1}(w),\theta_0),\qquad \xi^0_0=\xi_0,
    \qquad t=1,\ldots,K.
\end{equation}

\begin{theorem}[Lifted-state clock theorem]
\label{thm:fixed-clock-singularity}
Assume the leading maps $G_D$ and $U_D$ are $\eta$-independent, $C^1$ in their continuous arguments, and uniformly bounded on a neighborhood of the measurement state for the finite window $K$.  Then, uniformly over words of length $K$,
\begin{equation}
\label{eq:lifted-expansion}
    \theta_K(w)=\theta_0+\eta\sum_{t=1}^{K}U_{w_t}(\xi^0_t(w),\theta_0)+O(\eta^2).
\end{equation}
Consequently, for two equal-multiset words $w,w'$ and readout $u$,
\begin{equation}
\label{eq:lifted-contrast}
    \inner{u}{\theta_K(w)-\theta_K(w')}=\eta\,\Lambda_{w,w'}(u)+O(\eta^2),
\end{equation}
where $\Lambda_{w,w'}$ is the readout difference of the two frozen-parameter optimizer-state replays.  If $\Lambda_{w,w'}(u)\ne0$, the order contrast is $\Theta(\eta)$.  For regular optimizer families, the corresponding first-order replay is multiset-only and vanishes for all equal-multiset contrasts.
\end{theorem}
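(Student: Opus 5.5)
The plan is to compare the true lifted trajectory $(\xi_t(w),\theta_t(w))$ with the frozen-parameter replay $(\xi^0_t(w),\theta_0)$, showing by induction on $t\le K$ that both deviations are $O(\eta)$. Then expand the parameter update to first order. Fix a neighborhood $\mathcal N$ of the measurement state on which $G_D,U_D$ are $C^1$ in the continuous arguments and bounded. Let $L$ be a common Lipschitz constant there; it exists because there are finitely many domains and compactness can be ensured by shrinking $\mathcal N$. Let $M$ be a common sup bound for $\|U_D\|$, and let the implicit $O(\eta^2)$ remainders in \eqref{eq:lifted-map} be bounded by $C\eta^2$ uniformly on $\mathcal N$. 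The discrete counter coordinates are handled by noting that along $w$ and along its frozen replay the counter takes exactly the same values, since it advances on the step index regardless of $\theta$. The Lipschitz estimates are therefore applied slice-by-slice in the continuous coordinates with the counter held equal.

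\textbf{Step 1 (a priori bound).} By induction, $\|\theta_t-\theta_0\|\le t(\eta M+C\eta^2)=O(\eta)$ for $t\le K$. This keeps $\theta_t$ in $\mathcal N$ for $\eta$ small, uniformly over the finitely many words of length $K$.

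\textbf{Step 2 (state deviation).} Set $e_t=\|\xi_t(w)-\xi^0_t(w)\|$. Then
\[
e_t\le L e_{t-1}+L\|\theta_{t-1}-\theta_0\|,\qquad e_0=0,
\]
so a discrete Gronwall bound gives $e_t\le K L(1+L)^K\max_s\|\theta_s-\theta_0\|=O(\eta)$. Because $K$ is fixed, the exponential-in-$K$ constant is harmless. This is the step I expect to need the most care. The subtle points are that $G_D$ is \emph{not} near the identity, so errors are propagated rather than damped, and that one must verify the replay $\xi^0_t$ itself stays inside $\mathcal N$ where the bounds hold. I would handle the latter by taking $\mathcal N$ to contain the finite orbit of $\xi_0$ under all words of length $\le K$ at frozen $\theta_0$. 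That orbit is a finite set, so a neighborhood of it suffices.

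\textbf{Step 3 (expansion and contrast).} Telescoping gives $\theta_K-\theta_0=\eta\sum_t U_{w_t}(\xi_t,\theta_{t-1})+O(\eta^2)$. Replacing each argument by $(\xi^0_t,\theta_0)$ costs $\eta\cdot L\cdot O(\eta)$ per step by Step 2, for a total of $O(\eta^2)$, which yields \eqref{eq:lifted-expansion}. Pairing with $u$ and subtracting the expansions for $w$ and $w'$ gives \eqref{eq:lifted-contrast}, with
\[
\Lambda_{w,w'}(u)=\Big\langle u,\ \sum_t U_{w_t}(\xi^0_t(w),\theta_0)-\sum_t U_{w'_t}(\xi^0_t(w'),\theta_0)\Big\rangle,
\]
and $\Lambda\neq0$ gives $\Theta(\eta)$. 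For the regular case, the lift absorbs $\xi$ into $z$ with $G_D=\mathrm{id}+O(\eta)$, so $\xi^0_t\equiv\xi_0$. The first-order sum is then $\sum_t X_{w_t}(z_0)$, which depends only on the multiset, and this is exactly Proposition~\ref{prop:regular-commutator}. Hence the replay coefficient vanishes for equal multisets.
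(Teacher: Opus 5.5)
Your proposal is correct and follows essentially the same route as the paper's proof in Appendix~\ref{app:fixed-clock-nondegen}: an a priori $O(\eta)$ bound on $\theta_t-\theta_0$, a Lipschitz induction for $\xi_t(w)-\xi^0_t(w)$, substitution of $(\xi^0_t,\theta_0)$ into the summed parameter updates at cost $O(\eta^2)$, and the flat first-order sum for regular families. Your extra care makes explicit what the paper's sketch leaves implicit: running the induction jointly, noting that counters coincide along both trajectories, and choosing the neighborhood to contain the $O(1)$-displaced frozen replay orbit.
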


\begin{proof}[Proof sketch]
Because each parameter step is $O(\eta)$, $\theta_t=\theta_0+O(\eta)$ for every $t\le K$.  Lipschitz continuity of $G_D$ then implies $\xi_t(w)=\xi_t^0(w)+O(\eta)$ by induction: each step sees only an $O(\eta)$ perturbation in the parameter argument and propagates a finite number of such errors.  Substituting $\xi_t^0$ and $\theta_0$ into the parameter update changes each summand by $O(\eta)$, hence the endpoint by $O(\eta^2)$ after multiplication by the step scale.  The substitution yields Eq.~\eqref{eq:lifted-expansion}.  Proposition~\ref{prop:regular-commutator} proves the contrast with regular optimizers.  The full proof is in Appendix~\ref{app:fixed-clock-nondegen}.
\end{proof}

\paragraph{AdamW replay.}
For AdamW, the optimizer state is $\xi=(m,v,s)$ and the fixed-clock replay updates
\begin{align}
    m^+ &= \beta_1m+(1-\beta_1)g_D(\theta), &
    v^+ &= \beta_2v+(1-\beta_2)g_D(\theta)^{\odot2}, &
    s^+ &= s+1,
\end{align}
followed by
\begin{equation}
    U_D(\xi^+,\theta)=
    -\frac{m^+/(1-\beta_1^{s^+})}{\sqrt{v^+/(1-\beta_2^{s^+})}+\epsilon}-\lambda\theta.
\end{equation}
The theorem therefore covers full frozen-gradient AdamW replay, including finite-step $m$ and $v$, both de-biasing clocks, $\epsilon$, and decoupled weight decay, provided the replay preconditioner remains in a smooth region along the finite path.  Concretely, this region is where the de-biased second moment satisfies $v^+/(1-\beta_2^{s^+})\ge\delta>0$; the $\epsilon$ in $\sqrt{v^+/(1-\beta_2^{s^+})}+\epsilon$ bounds the value of the map, not its derivative, so where $\delta$ fails the $C^1$ argument does not apply (see Appendix~\ref{app:fixed-clock-nondegen}).  Sign-based Lion and orthogonalized Muon are nonsmooth, so we treat them as empirical fixed-clock comparisons outside the closed-form AdamW coefficient.

\paragraph{Buffering versus adaptivity.}
Coordinate adaptivity is not required for the shift: fixed-$\beta$ first-moment memory already gives non-flat first-order impulse weights, so nondegenerate equal-multiset contrasts are generically $O(\eta)$.  For the normalized fixed-$\beta$ momentum buffer
\begin{equation}
    m^+=\beta m+(1-\beta)g_D(\theta),\qquad \theta^+=\theta-\eta m^+,
\end{equation}
the frozen-parameter endpoint coefficient is
\begin{equation}
    C_1^{\mathrm{mom}}(w)=-\sum_{r=1}^K \bigl(1-\beta^{K-r+1}\bigr)g_{w_r}
\end{equation}
up to the word-independent warmed-buffer term.  These impulse weights depend on position whenever $\beta\ne0$, so equal-multiset contrasts are generically $O(\eta)$.  The unnormalized heavy-ball convention $v^+=\beta v+g_D,\theta^+=\theta-\eta v^+$ gives the same profile divided by $1-\beta$.

AdamW adds de-biasing denominators, coordinate preconditioning, and a second-moment path.  These features change the coefficient and motivate the full-replay comparison, but fixed-clock first-moment memory alone suffices to make the first-order impulse weights position-dependent.  The non-adaptive control compares SGD with fixed-$\beta$ momentum: the two updates differ only by the buffer, and the buffer changes the first-order weights from flat to position-dependent.  To return a buffer to the regular $O(\eta^2)$ exponent, its endpoint impulse profile must flatten with $\eta$.  In the matched-clock control we scale the decay and the normalization together, $\beta=e^{-a\eta}$ with respect to the continuous-time training variable $\tau=\eta k$, so the normalized weight spread is $O(\eta)$.  Scaling the decay alone is not enough for raw heavy-ball momentum, whose $1/(1-\beta)$ gain cancels the flattening.  Table~\ref{tab:clock-taxonomy} summarizes these clock classes.

\begin{table}[t]
\centering
\small
\caption{Clock classes used in the paper.  The class is determined by the reduced first-order impulse profile for equal-multiset words; Muon and Lion are nonsmooth fixed-clock optimizers, included as empirical comparisons outside the closed-form profile.}
\label{tab:clock-taxonomy}
\begin{tabular}{p{0.24\linewidth}p{0.37\linewidth}p{0.25\linewidth}}
\toprule
Optimizer/control & Reduced first-order profile & Order class \\
\midrule
SGD / no buffer & flat over positions & regular, $O(\eta^2)$ \\
fixed-$\beta$ momentum & $1-\beta^{K-r+1}$ & first-order, $O(\eta)$ \\
AdamW fixed clock & finite-clock $W_{r,K}^{(s,\beta)}$ (Eq.~\eqref{eq:W-kernel}) with replay preconditioner & first-order, $O(\eta)$ \\
matched-clock memory & spread shrinks as $O(\eta)$ & regular exponent, $O(\eta^2)$ ($\eta$-dependent clock, not Def.~\ref{def:regular}) \\
Muon / Lion & nonsmooth fixed-clock state & empirical fixed-clock comparisons \\
\bottomrule
\end{tabular}
\end{table}

\begin{assumption}[Nondegenerate replay contrast]\label{assump:nongen}
For the selected equal-multiset words $w,w'$, readout $u$, and measurement state, the replay coefficient $\Lambda_{w,w'}(u)$ in Eq.~\eqref{eq:lifted-contrast} is nonzero.
\end{assumption}

\noindent\textbf{Nondegeneracy.}
The first-order class is conditional on a nonzero replay coefficient.  If $\Lambda_{w,w'}(u)$ vanishes for the selected domains and readout, the leading term may be higher order.  Such vanishing requires an analytic relation among the finite-dimensional local gradient tuples, unless the model family is restricted to that relation.  The theorem is therefore a generic exponent classification, not a guarantee that every AB/BA/readout cell has a nonzero first-order coefficient.

\paragraph{Specialization used below.}
Theorem~\ref{thm:fixed-clock-singularity} is the general lifted-state clock law.  The next sections freeze the coordinate preconditioner $P_0$ at the measurement state to obtain a scalar kernel, a variance formula, and a cycle decomposition.  This frozen-preconditioner step is the main modeling assumption in the closed-form part of the paper: it is not used for the exponent theorem, and it is not treated as a complete symbolic expansion of AdamW.  Full frozen-gradient replay, $\tau_{\mathrm{Adam}}$, keeps the finite-step $m$ and $v$ paths and both clocks, while the conservativity gap $\chi$ reports how far the scalar frozen-$P$ model is from that full replay.  Table~\ref{tab:notation-map} in Section~\ref{sec:transport} consolidates these objects, their roles, and their scope.

\section{AdamW kernel and order-variance floor}
\label{sec:variance-floor}

We now specialize the lifted-state law to AdamW with the coordinate preconditioner frozen at the measurement state, $P_0=(\sqrt{\hat v_0}+\epsilon)^{-1}$.  The first-moment buffer and the bias-correction clock remain finite.  For first-moment decay $\beta$ and current clock $s$, a gradient impulse at position $r$ has endpoint weight
\begin{equation}
\label{eq:W-kernel}
    W^{(s,\beta)}_{r,K}=(1-\beta)\sum_{t=r}^{K}\frac{\beta^{t-r}}{1-\beta^{s+t}}.
\end{equation}
We write $A_T=A_T^{(s,\beta)}=\sum_{q=1}^{T}\beta^q/(1-\beta^{s+q})$ when $s$ and $\beta$ are fixed by context.

\begin{proposition}[Frozen-$P$ first coefficient]\label{prop:first-order-kernel}
For a word $w$,
\begin{equation}
\label{eq:c1-word}
    \theta_K(w)=\theta_0+\eta C_1(w)+O(\eta^2),\qquad
    C_1(w)=-A_KP_0m_0-\sum_{r=1}^{K}W^{(s,\beta)}_{r,K}P_0g_{w_r}.
\end{equation}
The warmed-buffer term is word-independent and cancels from centered order contrasts.  Like that term, $C_1$ is stated at $\lambda=0$: decoupled weight decay adds a word-independent first-order endpoint term $-K\lambda\theta_0$ that also cancels from every centered order contrast.  For balanced blocks $A^{K/2}B^{K/2}$ versus $B^{K/2}A^{K/2}$,
\begin{equation}
\label{eq:balanced-c1}
    \inner{g_E}{C_1(A^{K/2}B^{K/2})-C_1(B^{K/2}A^{K/2})}
    =S_{K,s,\beta}\inner{g_E}{P_0(g_B-g_A)},
\end{equation}
where $S_{K,s,\beta}=\sum_{r\le K/2}W_{r,K}^{(s,\beta)}-\sum_{r>K/2}W_{r,K}^{(s,\beta)}$.  We also use the dimensionless coefficient
\begin{equation}
\label{eq:c-normalized}
    c_{K,s,\beta}=\frac{(1-\beta^{s+1})S_{K,s,\beta}}{K(1-\beta)}.
\end{equation}
\end{proposition}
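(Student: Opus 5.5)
The plan is to apply Theorem~\ref{thm:fixed-clock-singularity} with the frozen-preconditioner specialization of the AdamW replay. There the update is $U_D(\xi^+,\theta)=-P_0\,m^+/(1-\beta^{s^+})$ at $\lambda=0$, with $P_0$ held fixed. Under this specialization the $v$-path enters only through $P_0$, so the lifted state reduces to the first-moment buffer $m$ and the counter. The counter starts at $s$ and advances deterministically, so after step $t$ it equals $s+t$. Eq.~\eqref{eq:lifted-expansion} then gives
\[
\theta_K(w)=\theta_0-\eta\sum_{t=1}^K \frac{P_0\,m^0_t(w)}{1-\beta^{s+t}}+O(\eta^2),
\]
where $m^0_t$ is the frozen-parameter replay with gradients $g_{w_t}=g_{w_t}(\theta_0)$. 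The hypotheses of the theorem hold because the frozen map is linear in $m$, and the denominators $1-\beta^{s+t}\ge 1-\beta>0$ are bounded away from zero. The case $s=0$ also works, since $t\ge1$.

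Next, solve the linear recursion in closed form. Unrolling gives
\[
m^0_t=\beta^t m_0+(1-\beta)\sum_{r=1}^t\beta^{t-r}g_{w_r}.
\]
Substituting and exchanging the order of summation over $1\le r\le t\le K$ splits the coefficient into two parts. The $m_0$ term is $\sum_{t}\beta^t/(1-\beta^{s+t})\,P_0m_0=A_KP_0m_0$. The gradient term is $\sum_r\bigl[(1-\beta)\sum_{t=r}^K\beta^{t-r}/(1-\beta^{s+t})\bigr]P_0g_{w_r}=\sum_r W^{(s,\beta)}_{r,K}P_0g_{w_r}$, which is Eq.~\eqref{eq:c1-word}. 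The warmed-buffer term depends only on $m_0,s,\beta,K$, so it is word-independent. If $\lambda\ne0$, the decay term $-\lambda\theta_{t-1}=-\lambda\theta_0+O(\eta)$ contributes $-K\lambda\theta_0$ at first order. This contribution is also word-independent, so both terms cancel in any difference of two words.

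For the balanced contrast, take $w=A^{K/2}B^{K/2}$ and $w'=B^{K/2}A^{K/2}$. For $r\le K/2$ the gradient difference is $g_{w_r}-g_{w'_r}=g_A-g_B$, and for $r>K/2$ it is $g_B-g_A$. Hence
\[
C_1(w)-C_1(w')=-\Bigl(\sum_{r\le K/2}W_{r,K}-\sum_{r>K/2}W_{r,K}\Bigr)P_0(g_A-g_B)=S_{K,s,\beta}P_0(g_B-g_A).
\]
Pairing with $g_E$ gives Eq.~\eqref{eq:balanced-c1}. The quantity $c_{K,s,\beta}$ is just a definition, so it needs no proof.

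The main obstacle is bookkeeping rather than analysis. One has to fix the clock convention exactly: the counter is incremented before de-biasing, so the denominator at step $t$ is $1-\beta^{s+t}$, and $s$ is the pre-window count. One also has to check that the frozen-$P$ model is a legitimate instance of the lifted-state maps, with $C^1$ and bounded maps near the measurement state. Beyond that, the index exchange in the double sum is the only step that needs care.
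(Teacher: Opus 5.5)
Your proposal is correct and follows the paper's own derivation: you unroll the first-moment recursion with frozen gradients, substitute into the de-biased frozen-$P$ endpoint sum $-\eta\sum_t P_0 m_t/(1-\beta^{s+t})$, exchange the order of summation to read off $A_K$ and $W^{(s,\beta)}_{r,K}$, and subtract the two balanced words to get $S_{K,s,\beta}$. Your explicit appeal to Theorem~\ref{thm:fixed-clock-singularity} for the $O(\eta^2)$ remainder is a useful addition that the paper leaves implicit; note that this step also needs $g_D$ to be $C^1$ near $\theta_0$, since linearity in $m$ alone does not cover the $\theta$-argument.
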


For AdamW, the fixed-clock mechanism is visible in the kernel: non-flat finite-clock weights turn an equal-multiset contrast into an order-$\eta$ displacement.  At $K=16,s=100,\beta_1=0.9$, the endpoint impulse weight falls from $0.815$ to $0.100$ across the window.

The same kernel gives the distribution of outcomes over uncontrolled orderings.  Let $\phi_i=\inner{g_E}{P_0g_i}$ for the $K$ blocks in a fixed multiset, let $\sigma_\phi^2$ be their finite-population variance, and define
\begin{equation}
\label{eq:VK}
    V_K=\sum_{r=1}^K\left(W^{(s,\beta)}_{r,K}-\bar W\right)^2,
    \qquad
    \bar W=K^{-1}\sum_r W^{(s,\beta)}_{r,K}.
\end{equation}

\begin{theorem}[Order-fluctuation expansion and asymptotic floor]
\label{thm:variance-floor}
For a uniformly random permutation $w$ of a fixed multiset, the frozen-preconditioner expansion satisfies
\begin{equation}
\label{eq:variance-floor}
    \operatorname{Var}_w\bigl[\inner{g_E}{\theta_K(w)}\bigr]
    =\eta^2\frac{K}{K-1}\sigma_\phi^2V_K+2\eta^3\operatorname{Cov}_w(L,Q)+O(\eta^4),
\end{equation}
where $L$ and $Q$ are the first- and second-order word coefficients.  Regular optimizers have flat first-order weights, so $V_K=0$ and the leading order-induced variance is $O(\eta^4)$.
\end{theorem}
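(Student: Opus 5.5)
We expand the readout in powers of $\eta$ and compute the variance of a permutation statistic. By Proposition~\ref{prop:first-order-kernel}, carried one order further under the smoothness assumptions of Theorem~\ref{thm:fixed-clock-singularity}, each word satisfies
\[
\inner{g_E}{\theta_K(w)}=\inner{g_E}{\theta_0}+\eta L(w)+\eta^2Q(w)+R_\eta(w),\qquad R_\eta(w)=O(\eta^3).
\]
The remainder bound is uniform over the finitely many permutations of the fixed multiset. Here $L(w)=\inner{g_E}{C_1(w)}$ and $Q$ is the second-order word coefficient.

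Taking the variance over the uniform permutation gives
\[
\eta^2\operatorname{Var}(L)+2\eta^3\operatorname{Cov}(L,Q)+\eta^4\operatorname{Var}(Q)+2\operatorname{Cov}(\eta L+\eta^2Q,\,R_\eta)+\operatorname{Var}(R_\eta).
\]
The terms beyond the first two are bounded as follows:
\begin{itemize}
\item $\eta^4\operatorname{Var}(Q)=O(\eta^4)$ directly.
\item $\operatorname{Var}(R_\eta)=O(\eta^6)$, since $R_\eta$ is uniformly $O(\eta^3)$.
\item The cross term with $R_\eta$ is $O(\eta\cdot\eta^3)=O(\eta^4)$ by Cauchy--Schwarz. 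Since $L,Q$ are bounded on a finite set, this is enough.
\end{itemize}
So only $\operatorname{Var}(L)$ needs an explicit formula.

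To compute $\operatorname{Var}(L)$, first drop the word-independent warmed-buffer term $-A_K\inner{g_E}{P_0m_0}$ (and the weight-decay term). This leaves
\[
L(w)=c-\sum_{r=1}^K W_r\,\phi_{\pi(r)},
\]
where $\pi$ is a uniform bijection from positions to the $K$ blocks and $W_r=W^{(s,\beta)}_{r,K}$. This is a linear permutation statistic, so we use the standard finite-population moments:
\[
\operatorname{Var}(\phi_{\pi(r)})=\sigma_\phi^2,\qquad \operatorname{Cov}(\phi_{\pi(r)},\phi_{\pi(r')})=-\frac{\sigma_\phi^2}{K-1}\quad (r\neq r').
\]
Then
\[
\operatorname{Var}(L)=\sigma_\phi^2\Bigl[\sum_r W_r^2-\frac{1}{K-1}\sum_{r\ne r'}W_rW_{r'}\Bigr].
\]
Using $\sum_{r\ne r'}W_rW_{r'}=(\sum_r W_r)^2-\sum_r W_r^2$, the bracket becomes
\[
\frac{K}{K-1}\Bigl[\sum_r W_r^2-K\bar W^2\Bigr]=\frac{K}{K-1}V_K,
\]
which is the stated leading coefficient. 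A cleaner route is to replace $W_r$ by $W_r-\bar W$ from the start. This does not change $L$ beyond a constant, because $\sum_r\phi_{\pi(r)}$ is permutation-invariant.

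For regular optimizers, Proposition~\ref{prop:regular-commutator} gives flat first-order weights, so $W_r$ is constant in $r$. Hence $V_K=0$, and $L$ is constant, so $\operatorname{Cov}(L,Q)=0$. The first nonvanishing term is then $\eta^4\operatorname{Var}(Q)$, i.e.\ $O(\eta^4)$.

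The main obstacle is justifying the second-order expansion with a uniform $O(\eta^3)$ remainder. Theorem~\ref{thm:fixed-clock-singularity} only assumes $C^1$ maps and gives an $O(\eta^2)$ remainder. The plan is to strengthen the hypotheses to $C^2$ in the continuous arguments on the smooth region $v^+/(1-\beta_2^{s^+})\ge\delta$, keeping the preconditioner frozen at $P_0$. With that, rerun the induction in the proof of Theorem~\ref{thm:fixed-clock-singularity} one order further: write $\xi_t=\xi_t^0+\eta\xi_t^1+O(\eta^2)$ and substitute this into $U$, so that $Q$ is defined and the remainder is controlled. Finiteness of $K$ and of the permutation orbit then makes all constants uniform. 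If the paper intends only the $C^1$ setting, the honest fallback is to state the result as $\eta^2\frac{K}{K-1}\sigma_\phi^2V_K+O(\eta^3)$ and to define $Q$ under the extra smoothness.
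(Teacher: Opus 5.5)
Your proposal is correct and follows the paper's proof essentially step for step. Like the paper, you expand the readout as $\eta L+\eta^2Q+O(\eta^3)$, drop the word-independent warmed-buffer term, get $\operatorname{Var}_w(L)=\frac{K}{K-1}\sigma_\phi^2V_K$ from the finite-population moments $\sigma_\phi^2$ and $-\sigma_\phi^2/(K-1)$ (the Wald--Wolfowitz/Hoeffding identity), and use flat first-order weights to obtain $V_K=0$ and an $O(\eta^4)$ variance for regular optimizers. The only difference is that the paper takes the $O(\eta^3)$ remainder directly from the frozen-$P$ expansion of Proposition~\ref{prop:second-order-transport}, which implicitly assumes the twice-differentiable gradient expansion you state explicitly, whereas you propose rederiving it under a $C^2$ strengthening and do the remainder bookkeeping (Cauchy--Schwarz, finite orbit) more carefully than the paper does.
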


\begin{proof}[Proof sketch]
After centering, the first-order orbit statistic is $Y(w)=-\eta\sum_r W_{r,K}^{(s,\beta)}\phi_{w_r}$, the $\eta$-scaled, warmed-buffer-centered form of the first-order coefficient $L$ in Eq.~\eqref{eq:variance-floor}.  Sampling a fixed multiset without replacement gives $\operatorname{Var}(\phi_{w_r})=\sigma_\phi^2$ and $\operatorname{Cov}(\phi_{w_r},\phi_{w_t})=-\sigma_\phi^2/(K-1)$ for $r\ne t$.  The variance of this weighted linear statistic gives the factor $\frac{K}{K-1}\sum_r(W_r-\bar W)^2$ \citep{waldwolfowitz1944,hoeffding1951}; the $\eta^3$ term is the first interaction with transport curvature.  The full proof and the per-class $V_K$ computations are in Appendix~\ref{app:variance-floor-proof}.
\end{proof}

The leading variance term is determined by the step scale, the measured heterogeneity $\sigma_\phi^2$, and the optimizer-memory contrast $V_K$.  At a fixed measurement state, that term is an asymptotic floor for the order component, not a pointwise finite-$\eta$ bound or a bound against other stochastic sources.  Writing the order variance as $\eta^2\mathcal{A}_K+\eta^3\mathcal{B}_K+O(\eta^4)$ with $\mathcal{A}_K=\frac{K}{K-1}\sigma_\phi^2 V_K$ and $\mathcal{B}_K$ the $\eta^3$ transport covariance, the perturbative-transport condition $|\eta^3\mathcal{B}_K|\le\gamma\,\eta^2\mathcal{A}_K$ with $\gamma<1$ for all $\eta\le\eta_0$ gives $\operatorname{Var}_w\ge(1-\gamma)\,\eta^2\mathcal{A}_K$, an $\eta^2$ lower bound that regular optimizers lack (Appendix~\ref{app:floor-scope}); at finite $\eta$, the $\eta^3$ covariance can be negative.  Across a learning-rate sweep the warmed state can change $\sigma_\phi^2$, so empirical variance exponents should be read as class separation plus measured-input decomposition, not as exact parameter-free exponent fits.  Full frozen-gradient AdamW still has a first-order replay by Theorem~\ref{thm:fixed-clock-singularity}, but its $v$ path need not reduce to the scalar statistic in Eq.~\eqref{eq:variance-floor}; for that reason we compare the frozen-$P$ specialization with full replay and $\chi$.

\paragraph{Matched-clock restoration.}
When the memory clock is matched to the continuous-time training variable $\tau=\eta k$, $\beta=e^{-a\eta}$ and $s_0=T_c/\eta$, the normalized impulse weights flatten to an $O(\eta)$ spread.  With saturated de-biasing this makes $V_K=\Theta(\eta^2)$ and returns the order variance to the regular $O(\eta^4)$ class.  The matched clock is not a regular family in the sense of Definition~\ref{def:regular}, since $\beta$ and $s_0$ depend on $\eta$; it reaches the regular exponent only through the flattened reduced impulse profile.  The matched-clock arm co-scales decay and normalization; the separate SGD versus fixed-$\beta$ momentum control isolates fixed-clock buffering from coordinate adaptivity.

\paragraph{Shuffle-seed budget.}
Using one shuffle seed makes a comparison reproducible, but it does not estimate the order component of uncertainty.  If order noise has standard deviation $\sigma_{\mathrm{ord}}$ and two configurations differ by a true gap $\Delta$, the per-side seed count for a level-$\alpha$ comparison with power $1-\beta_{\mathrm{pow}}$ is
\begin{equation}
\label{eq:seed-count}
    n^*\ge \frac{2\sigma_{\mathrm{ord}}^2(z_{1-\alpha/2}+z_{1-\beta_{\mathrm{pow}}})^2}{\Delta^2}
    \approx 15.7\left(\frac{\sigma_{\mathrm{ord}}}{\Delta}\right)^2
\end{equation}
for $\alpha=0.05$ and 80\% power.  The seed-count calculation estimates $\sigma_{\mathrm{ord}}$ from Eq.~\eqref{eq:variance-floor} or from a small ordering pilot and asks whether the seed budget resolves the reported gap.

\paragraph{Held-out-NLL calibration.}
A held-out-NLL calibration puts the scale in a concrete model-selection comparison.  On Llama-3.2-1B LoRA SFT, we compare a 50/50 math--code mix with a 70/30 mix for 512 AdamW steps, using 25 shuffle seeds per side and a fixed mixed held-out NLL probe.  The larger learning rates satisfied the degeneracy criteria without exclusions:
\begin{center}
\small
\begin{tabular}{lcccc}
\toprule
$\eta$ & mean gap A--B & pooled $\sigma_{\mathrm{ord}}$ & $|\Delta|/\sigma_{\mathrm{ord}}$ & sign-change rate \\
\midrule
$5\times10^{-5}$ & $+0.00159$ & $0.00111$ & $1.43$ & 16.5\% \\
$1\times10^{-4}$ & $+0.00083$ & $0.00168$ & $0.49$ & 33.0\% \\
$2\times10^{-4}$ & $-0.00127$ & $0.00422$ & $0.30$ & 43.7\% \\
\bottomrule
\end{tabular}
\end{center}
The absolute NLL gaps are small, and the mean gap changes sign across the sweep.  The held-out-NLL table is a power calibration: when the configuration gap is comparable to the order-noise floor, a single-shuffle estimate can change the sign of the measured comparison.  The table calibrates the seed budget rather than choosing between these mixes.

\section{Local diagnostics and \texorpdfstring{frozen-$P$}{frozen-P} scope}
\label{sec:transport}
\label{sec:sorting-curl}

The first-order coefficient is evaluated at the measurement state, whereas live training evaluates gradients along the perturbed path.  With the local expansion $g_D(\theta)=g_D+H_D(\theta-\theta_0)+O(\|\theta-\theta_0\|^2)$, substituting the first-order prefix trajectory gives the following coefficient.

\begin{proposition}[Frozen-$P$ transport coefficient]\label{prop:second-order-transport}
The endpoint has expansion
\begin{equation}
\label{eq:second-order-word}
    \theta_K(w)=\theta_0+\eta C_1(w)+\eta^2C_2(w)+O(\eta^3),
\end{equation}
\begin{equation}
    C_2(w)=\sum_{t=1}^{K}W_{t,K}P_0H_{w_t}\left(\sum_{r<t}W_{r,t-1}P_0g_{w_r}+A_{t-1}P_0m_0\right).
\end{equation}
\end{proposition}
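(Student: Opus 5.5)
The plan is to work directly with the frozen-$P$ recursion and expand it to second order, in the same setting as Proposition~\ref{prop:first-order-kernel} ($\lambda=0$, $P_0$ frozen, first-moment buffer and de-biasing clock kept finite). Write $c_t=1/(1-\beta^{s+t})$. The recursion is
\[
m_t=\beta m_{t-1}+(1-\beta)g_{w_t}(\theta_{t-1}),\qquad \theta_t=\theta_{t-1}-\eta c_tP_0m_t .
\]
The first step is to unroll this exactly, with no expansion yet. Solving the linear buffer recursion gives
\[
m_t=\beta^tm_0+(1-\beta)\sum_{r\le t}\beta^{t-r}g_{w_r}(\theta_{r-1}).
\]
Summing $\theta_t-\theta_{t-1}$ over $t\le K$ and exchanging the order of summation produces two pieces. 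The $m_0$ piece collects $\sum_t c_t\beta^t=A_K$. Each gradient evaluated at position $r$ collects $(1-\beta)\sum_{t=r}^K\beta^{t-r}c_t=W_{r,K}^{(s,\beta)}$. This yields the exact identity
\[
\theta_K(w)-\theta_0=-\eta A_KP_0m_0-\eta\sum_{r=1}^KW_{r,K}P_0\,g_{w_r}(\theta_{r-1}).
\]
The same identity applied to the length-$(t-1)$ prefix of $w$ gives $\theta_{t-1}-\theta_0$ in terms of $A_{t-1}$ and $W_{r,t-1}$. Both computations start from the same warmed state, so they share the same clock $s$, and the prefix kernel is exactly $W_{r,t-1}^{(s,\beta)}$.

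The second step is to substitute the local expansion $g_D(\theta)=g_D+H_D(\theta-\theta_0)+O(\|\theta-\theta_0\|^2)$ into the exact identity. The zeroth-order part reproduces $C_1(w)$ of Proposition~\ref{prop:first-order-kernel}. For the Hessian part, I will replace $\theta_{t-1}-\theta_0$ by its first-order prefix value
\[
-\eta\Bigl(A_{t-1}P_0m_0+\sum_{r<t}W_{r,t-1}P_0g_{w_r}\Bigr)+O(\eta^2).
\]
This value comes from Proposition~\ref{prop:first-order-kernel} applied to the prefix, or equivalently from the exact identity with frozen gradients. It then gets multiplied by the outer factor $-\eta W_{t,K}P_0H_{w_t}$. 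The two minus signs cancel, and the resulting $\eta^2$ term is exactly the stated $C_2(w)$.

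The third step, which I expect to be the only real obstacle, is controlling the remainder uniformly over words. I will assume, on a neighborhood of $\theta_0$, that the $g_D$ are $C^2$ with bounded second derivatives, and that $P_0$ and $m_0$ are bounded. The weights satisfy $c_t\le 1/(1-\beta^{s+1})$, so they are bounded for fixed $K$. An induction on $t$, as in the proof of Theorem~\ref{thm:fixed-clock-singularity}, then gives $\|\theta_t-\theta_0\|=O(\eta)$ and $\|m_t\|=O(1)$ uniformly for $t\le K$.

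The error terms then fall into three groups.
\begin{itemize}
\item The quadratic Taylor remainder is $O(\eta^2)$ inside the bracket, hence contributes $O(\eta^3)$ to the endpoint.
\item Replacing the true prefix displacement by its first-order value costs $O(\eta^2)$ inside $H_{w_t}(\cdot)$, hence also $O(\eta^3)$.
\item Only finitely many ($K$) such terms appear, so the accumulated remainder is $O(\eta^3)$, as the statement requires.
\end{itemize}
Finally, I will remark that reinstating $\lambda>0$ adds word-dependent corrections only through $\lambda H$ cross terms at order $\eta^2$. For this reason the proposition is stated at $\lambda=0$, consistent with Proposition~\ref{prop:first-order-kernel}.
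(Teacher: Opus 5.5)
Your proposal is correct and follows the paper's route. The appendix likewise Taylor-expands $g_D$ about $\theta_0$ and substitutes the first-order prefix $x_{t-1}=-\sum_{r<t}W_{r,t-1}P_0g_{w_r}-A_{t-1}P_0m_0$ into the kernel-weighted endpoint sum; your exact unrolled identity with live gradients and your explicit $O(\eta^3)$ remainder count make that sketch rigorous.

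Your closing remark on weight decay is inaccurate, though it does not affect the $\lambda=0$ statement. With $\lambda>0$, the decay term $-\eta\lambda\sum_t(\theta_{t-1}-\theta_0)$ also contributes a Hessian-free, word-dependent piece $-\eta^2\lambda\sum_t x_{t-1}$, which the paper assigns to $\tau_{\mathrm{Adam}}$. So the $\lambda$ corrections are not only $\lambda H$ cross terms.
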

The $m_0$ term is needed: it cancels from $C_1$ contrasts but enters $C_2$ through $H_{w_t}P_0m_0$, yielding $M_{D,0}$ below.  The displayed coefficient omits weight decay; Appendix~\ref{app:transport-weight-decay} gives the addendum.  This distinction matters because $\rho$ compares the live curvature correction $\Delta^{(1)}$ with the full frozen-gradient replay $\tau_{\mathrm{Adam}}$.

\paragraph{Trust-region ratio.}
The full frozen-gradient AdamW replay is denoted $\tau_{\mathrm{Adam}}$.  It includes finite-step $m$ and $v$, both clocks, $\epsilon$, and weight decay.  The ratio
\begin{equation}
\label{eq:rho}
    \rho=\left|\frac{\Delta^{(1)}}{\tau_{\mathrm{Adam}}}\right|
\end{equation}
compares curvature transport to the first-order optimizer-state response.  The ratio is a trust-region diagnostic: $\rho\ll1$ means first-order replay dominates, $\rho\sim1$ means transport is comparable to replay, and $\rho\gg1$ marks cells outside the local first-order regime.  When $|\tau_{\mathrm{Adam}}|$ is near zero, the ratio is ill conditioned: the cell marks a cancellation of the first-order replay coefficient, not a stable local-validity estimate.  The local order diagnostic of Algorithm~\ref{alg:diagnostic-app} (Appendix~\ref{app:protocol}) treats these as denominator-singular cases.  The correction $\tau_{\mathrm{Adam}}+\Delta^{(1)}$ is therefore a local approximation valid in the perturbative ($\rho<1$) regime.  The superscript in $\Delta^{(1)}$ marks the leading transport correction to replay, which enters at order $\eta^2$; it does not denote the order-$\eta$ first-order channel.

\paragraph{Finite-window curl and sortability.}
The frozen-$P$ specialization also gives a condition for when pairwise order preferences behave like differences of scalar domain values.  We use \emph{curl} in the finite-window, graph-theoretic sense: the signed sum of pairwise edge scores around a directed domain triangle.  If an edge score is a scalar potential difference, this triangle sum telescopes to zero; nonzero curl means pairwise preferences can form directed 3-cycles.  With $\phi_i=\inner{g_E}{P_0g_i}$, the first-order edge for domains $i,j$ is proportional to $\phi_j-\phi_i$, so its curl is zero.  The second-order edge for balanced blocks $i^hj^h$ versus $j^hi^h$ decomposes as
\begin{equation}
\label{eq:edge-decomp-main}
    e^{(2)}_{ij}=a_K(M_{ii}-M_{jj})+b_K(M_{ji}-M_{ij})+c_K(M_{i0}-M_{j0}),
\end{equation}
where $M_{D,q}=\inner{g_E}{P_0H_DP_0g_q}$ and $M_{D,0}=\inner{g_E}{P_0H_DP_0m_0}$.  The first and third terms are conservative potential differences.  The only term that can create nonzero triangle curl is
\begin{equation}
\label{eq:curl-main}
    B_{ij}=b_K(M_{ji}-M_{ij})=b_K\inner{P_0g_E}{H_jP_0g_i-H_iP_0g_j}.
\end{equation}
Thus pairwise order stays sortable by a scalar potential while the curl-carrying bracket term stays below the conservative (scalar-potential) part, in the measured perturbative regime $\rho_{\mathrm{curl}}<1$ (Section~\ref{sec:experiments}; the full edge decomposition is derived in Appendix~\ref{app:edge-decomp}).  We use
\begin{equation}
\label{eq:rhocurl}
    \rho_{\mathrm{curl}}(i,j)=\frac{|B_{ij}|}{|\Pi_{ij}|+\epsilon}
\end{equation}
as the edge-level radius variable, where $\Pi_{ij}$ is the conservative part plus the frozen-$P$ first-order edge and $\epsilon$ is a small fixed numerical stabilizer.

\paragraph{Bridge to full AdamW.}
Full AdamW can depart from this scalar picture through the $v$ path of its preconditioner.  Appendix~\ref{app:fullv-tangent} gives a full-$v$ AdamW correction recursion, computed by a backward-sensitivity (costate) sweep (Appendix~\ref{app:costate}): it reduces to the displayed frozen-$P$ coefficient when the derivative of the preconditioner map is dropped.  We quantify the scalar approximation by the conservativity gap
\begin{equation}
\label{eq:chi}
    \chi=\frac{|\tau_{\mathrm{Adam}}-\tau_P|}{|\tau_{\mathrm{Adam}}|+\epsilon}.
\end{equation}
Here $\tau_P$ is the scalar frozen-$P$ replay coefficient, the frozen-$P$ analog of $\tau_{\mathrm{Adam}}$ with the preconditioner map held fixed.  Low $\chi$ means the scalar frozen-$P$ coefficient is close to full replay; high $\chi$ means first-order replay may still be predictable but not scalar-potential conservative.  Low $\rho_{\mathrm{curl}}$ means pairwise preferences are locally scalar-sortable; when $\rho_{\mathrm{curl}}$ approaches one, bracket transport can create directed cycles.  A directed cycle, when it occurs, is a coefficient-level intransitive order preference that no scalar potential can represent: the unstabilized edge condition $|B_{ij}|<|\Pi_{ij}|$ at every edge is sufficient for local scalar sortability, and $\rho_{\mathrm{curl}}<1$ is the corresponding stabilized numerical radius; outside it scalar ranking is not guaranteed and directed cycles are empirically observed (Section~\ref{sec:experiments}).

The three radii are non-redundant axes: $\rho$ bounds the local first-order approximation, $\chi$ the scalar-versus-full-replay gap, and $\rho_{\mathrm{curl}}$ the scalar-sortability of pairwise order, so a cell can lie inside one radius and outside another.

\begin{table}[t]
\centering
\small
\caption{Dependency map for the order-coefficient chain $\Lambda\!\to\!\tau_{\mathrm{Adam}}\!\to\!\tau_P/W_{r,K}/V_K\!\to\!\rho/\chi/\rho_{\mathrm{curl}}$.  Each row records an object's role and the scope under which it is stated; no entry is a new result.}
\label{tab:notation-map}
\begin{tabularx}{\linewidth}{>{\raggedright\arraybackslash}p{0.12\linewidth}>{\raggedright\arraybackslash}X>{\raggedright\arraybackslash}p{0.29\linewidth}}
\toprule
Object & Role & Scope \\
\midrule
$\Lambda_{w,w'}(u)$ & First-order replay coefficient: $\inner{u}{\theta_K(w)-\theta_K(w')}=\eta\,\Lambda_{w,w'}(u)+O(\eta^2)$, nonzero gives a $\Theta(\eta)$ order contrast (Eq.~\eqref{eq:lifted-contrast}) & General lifted-state law; finite window $K$ at a fixed measurement state \\
$\tau_{\mathrm{Adam}}$ & Full frozen-gradient AdamW replay: finite-step $m,v$, both clocks, $\epsilon$, weight decay & Reference replay for $\rho$ and $\chi$ \\
$\tau_P$ & Scalar frozen-$P$ replay: the analog of $\tau_{\mathrm{Adam}}$ with the preconditioner map held fixed & Frozen-$P$ specialization (\S\ref{sec:transport}) \\
$W^{(s,\beta)}_{r,K}$ & Finite-clock impulse weight; non-flat weights make the equal-multiset contrast order $\eta$ (Eq.~\eqref{eq:W-kernel}) & Frozen-$P$ AdamW kernel \\
$V_K$ & Optimizer-memory contrast in the leading order variance $\tfrac{K}{K-1}\sigma_\phi^2 V_K$ (Eqs.~\eqref{eq:VK},~\eqref{eq:variance-floor}); $V_K=0$ for regular optimizers & Asymptotic floor at a fixed measurement state, not a pointwise finite-$\eta$ bound \\
$\rho$ & Trust-region ratio $|\Delta^{(1)}/\tau_{\mathrm{Adam}}|$: curvature transport versus first-order replay (Eq.~\eqref{eq:rho}) & Local approximation valid in the perturbative $\rho<1$ regime \\
$\chi$ & Conservativity gap $|\tau_{\mathrm{Adam}}-\tau_P|/(|\tau_{\mathrm{Adam}}|+\epsilon)$: scalar frozen-$P$ versus full replay (Eq.~\eqref{eq:chi}) & Low $\chi$: scalar close to full replay; high $\chi$: predictable but not scalar-conservative \\
$\rho_{\mathrm{curl}}$ & Edge radius $|B_{ij}|/(|\Pi_{ij}|+\epsilon)$: scalar-sortability of pairwise order (Eq.~\eqref{eq:rhocurl}) & Locally sortable when $|B_{ij}|<|\Pi_{ij}|$ at every edge; $\rho_{\mathrm{curl}}<1$ is the stabilized radius, cycles possible outside \\
\bottomrule
\end{tabularx}
\end{table}

\section{Empirical evidence}
\label{sec:experiments}

We test the clock law in mean AB/BA contrasts, ordering-orbit distributions, local validity, and a held-out-NLL shuffle-seed calibration.  All measurements are local LoRA fine-tuning cells.  Table~\ref{tab:evidence} summarizes the main measurements; Appendix~B--C and the artifact give protocol details, confidence intervals, row-level records, and interval-overlap cases.  The supplementary ZIP is organized as one directory per experiment---reproduction scripts, row-level records, and protocol locks, with no author-identifying metadata---and its README maps each directory to the figures and tables it reproduces (for example, \texttt{mean\_exponent} for the exponent split, \texttt{variance\_slope} for the order-variance floor and fixed-state replay, and \texttt{c1\_decision\_flip} for the held-out-NLL calibration).

\begin{table}[t]
\centering
\small
\caption{Main empirical measurements.  The table gives one anchor result per result type; full intervals are in Appendix~C and protocol details in Appendix~B, and the headline experiment rows are reproduced by the artifact.}
\label{tab:evidence}
\begin{tabular}{p{0.30\linewidth}p{0.55\linewidth}}
\toprule
Measurement & Anchor result \\
\midrule
Mean exponent split & SGD/matched-clock near $2$; AdamW/Lion near $1$ on two models and three domain pairs \\
Buffer mechanism & fixed-$\beta$ momentum slope $0.998$ versus buffer-free SGD $1.988$ \\
Variance floor & warmed AdamW variance slope $1.20$ versus matched/SGD $3.94/3.83$; fixed-state replay gives AdamW/fixed-$\beta$/SGD slopes $1.83/2.00/4.00$ \\
Ordering-orbit structure & fixed-clock AdamW ordering correlations $0.75$--$0.98$; matched-clock cells $\approx1$ \\
Additional optimizer & Muon variance slope $1.56$ $[1.31,1.82]$; Lion variance test is inconclusive \\
Local validity & $\rho<1$ residual Pearson $0.80$; zero directed 3-cycles (coefficient level) when every edge has $\rho_{\mathrm{curl}}<1$ \\
Shuffle-seed calibration & held-out-NLL mix comparison: $33$--$44\%$ sign changes when $|\Delta|/\sigma_{\mathrm{ord}}<1$ \\
\bottomrule
\end{tabular}
\end{table}

\paragraph{Exponent and mechanism.}
Figure~\ref{fig:eta-slope} is the mean-level test of Theorem~\ref{thm:fixed-clock-singularity}.  On Pythia-1B \citep{biderman2023pythia} and Llama-3.2-1B \citep{grattafiori2024llama3}, SGD and matched-clock memory scale near $\eta^2$, while fixed-clock AdamW and Lion scale near $\eta$.  A separate Pythia-1B control separates buffering from adaptivity: fixed-$\beta$ momentum has median slope $0.998$ over four domain pairs and three seeds, while buffer-free SGD has slope $1.988$ and matched-clock momentum has $1.960$.  Thus adding only a fixed-clock buffer moves the mean order-effect exponent from two to one.

\paragraph{Variance floor.}
A direct fixed-state replay isolates the class separation: warming one Pythia-1B cell, forking $(\theta,m,v,s)$, and replaying every $\eta$ from the same fork with bitwise-deterministic steps gives variance slopes $1.83/2.00/4.00$ for AdamW, fixed-$\beta$ momentum, and SGD (Appendix~\ref{app:fixed-state-replay}).  The warmed multi-cell sweep (Figure~\ref{fig:variance-slopes}) agrees across five learning rates and four model--pair series: the Theil--Sen slope of $\log\operatorname{Var}_w$ versus $\log\eta$ is $1.20$ for AdamW, versus $3.94$ for matched-clock momentum and $3.83$ for SGD, with non-overlapping bootstrap intervals ($[0.80,1.56]$, $[3.85,4.03]$, $[3.58,4.14]$; the AdamW interval is wide but does not overlap the regular arms).  The warmed AdamW slope sits below the fixed-$\sigma_\phi$ exponent because the warmed state changes the heterogeneity input $\sigma_\phi^2(\eta)$; dividing by the measured $\sigma_\phi^2(\eta)$ raises it to $1.86$.

\paragraph{Ordering-orbit structure.}
Ordering-by-ordering comparisons test the kernel's ordering/rank structure: across eight Pythia/Llama cells with 64 deterministic orderings each, the fixed-clock AdamW ordering-level correlations are $0.75$--$0.98$, while matched-clock cells return correlation essentially one.  The per-cell amplitude slopes are $0.62$--$0.80$: the kernel captures rank, not amplitude, a deficit due to the measured $\eta^3$ cross-term of Theorem~\ref{thm:variance-floor}, reported in Appendix~C and compared with the full-$v$ replay gap $\chi$.

\paragraph{Additional optimizers.}
These checks follow update structure rather than the AdamW derivation.  Muon's orthogonalized momentum is consumed on an $\eta$-independent clock; because its update is nonsmooth, its fixed-clock placement is empirical, outside the closed-form AdamW coefficient, with variance slope $1.56$ $[1.31,1.82]$.  Lion's mean contrast is fixed-clock, but its variance slope $2.87$ $[2.31,3.94]$ is inconclusive.  The closed-form floor remains the frozen-$P$ AdamW specialization; the broader result is the fixed-clock state exponent law.

\begin{figure}[t]
\centering
\includegraphics[width=0.78\linewidth]{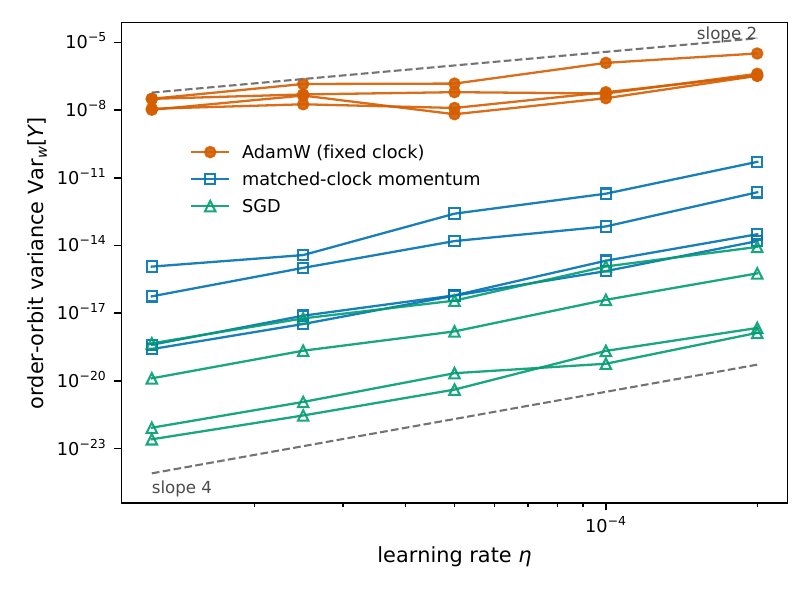}
\caption{Variance-level exponent split.  Fixed-clock AdamW order-orbit variance is several orders of magnitude larger than SGD and matched-clock memory and follows the shallow scaling expected from a first-order channel.  The regular controls follow the $O(\eta^4)$ law.  A direct fixed-state replay (one forked warmed state replayed across $\eta$) gives variance slopes $1.83$/$2.00$/$4.00$ for AdamW/fixed-$\beta$/SGD; the warmed sweep plotted here additionally varies the measured heterogeneity input $\sigma_\phi^2(\eta)$, lowering the plotted AdamW slope to $1.20$ before the measured-input correction.}
\label{fig:variance-slopes}
\end{figure}

\paragraph{Validity and sortability.}
The 192-row diagnostic grid spans Qwen \citep{yang2025qwen3}, Pythia, Llama, and Gemma \citep{gemma2024gemma2} local cells and tests whether local coefficients predict live AB/BA outcomes.  In the perturbative regime $\rho<1$, the frozen-$P$ transport coefficient predicts the live residual $\Delta_{\mathrm{live}}-\tau_{\mathrm{Adam}}$ with Pearson $0.80$; at high $\rho$, correlation becomes uninformative.  The very-low-$\rho$ bin is noise-limited because the residual correction is tiny once frozen replay already dominates.  As a predictor of approximation error, $\rho$ beats block length, gradient norms, curvature magnitude, denominator magnitude, and numerator magnitude, with Spearman $0.63$ and AUC $0.79$.  This comparison uses the full per-row diagnostic grid, released as the \texttt{baseline\_horserace} bundle in the artifact; Appendix~C reports the baseline table and uncertainty, and the artifact reproduces both the $\rho<1$ residual-validation rows and this $\rho$-versus-baselines comparison.  Thus the diagnostic has the expected ratio-test behavior: predictive accuracy inside the regime and loss of accuracy outside it.

The fixed-target triple experiment uses seven Pile domains on Llama-3.2-1B with a common evaluation target for every edge in a triangle.  The conservative component never cycles.  Across 700 coefficient cells, the full frozen-$P$ edge field has zero directed 3-cycles in all 286 cells with every edge satisfying $\rho_{\mathrm{curl}}<1$, and cycles appear above the radius.  The zero-cycle result supports the scalar-sortability diagnostic at coefficient level; live validation of cyclic cells was not included in this measurement.  We separately compare scalar frozen-$P$ with full replay: across the 192-row grid, median $\chi$ is $0.044$, 78\% of rows have $\chi<0.25$, and 90\% have $\chi<0.5$.

\section{Related work}
\label{sec:related}

\paragraph{Order expansions and operator brackets.}
Commutators are classical in geometric control and numerical integration \citep{sussmann1973orbits,jurdjevic1997geometric,bullo2004geometric}.  In learning, backward-error, multi-domain, and transfer-order work uses memoryless gradient brackets such as $\inner{g_E}{H_Bg_A-H_Ag_B}$ to reason about domain order \citep{dherin2023implicit,rukhovich2025commute,sweeney2026liebracket}.  Concurrent work frames training order as an information channel and a dominant share of the cumulative gradient \citep{ledoux2026order}; that per-step coherent-order quantity is compatible with our equal-multiset endpoint contrast, since a large per-step gradient share need not imply a large equal-multiset endpoint effect.  Closest methodologically, a recent splitting-method analysis treats SGD and momentum as splitting integrators and random reshuffling as a randomised splitting order, using backward-error analysis and Lie--Trotter versus Strang composition to derive step-size order \citep{shaw2025splitting}.  In that setting, symmetric minibatching with momentum \emph{lowers} the stochastic-gradient bias order (from $O(h^2)$ to $O(h^4)$), whereas our fixed-clock analysis moves the equal-multiset order exponent in the opposite direction, from two to one; the step-size $h$ there governs discretization bias, while $\eta$ here governs reordering sensitivity.  A related backward-error analysis isolates an order-dependent within-epoch correction carrying a finite-sum factor $(N-B)/(N-1)$ analogous to our $K/(K-1)$ \citep{smith2021origin}, but in the \emph{mean} modified loss for momentum-free SGD, and without-replacement trajectories admit a comparable memoryless decomposition into a with-replacement step plus a regularizer \citep{beneventano2023trajectories}; these corrections are memoryless and second order.  Memory-based backward-error analyses extend the line to momentum and AdamW.  The current-iterate collapse of \citet{cattaneo2025memory} shares our frozen-state collapse and even our bias-correction-clock coefficients, but it averages over minibatch orderings into an order-independent term.  The Adam expansion of \citet{cattaneo2024implicit} gives single-trajectory implicit bias as a perturbed one-norm regularizer; its minibatch corrections are order-sensitive in form but are not resolved into an equal-multiset contrast.  Here the objects are instead the run-to-run order \emph{variance} and its dependence on the optimizer buffer, so our question is how the bracket picture changes once optimizer state enters the local map: fixed-clock state moves equal-multiset order from the memoryless $O(\eta^2)$ bracket regime to $O(\eta)$, with local-validity ratios for the first-order calculation.

\paragraph{Curricula, reshuffling, and empirical order effects.}
Curriculum learning studies how examples or tasks should be ordered \citep{bengio2009curriculum,kumar2010selfpaced,graves2017automated,soviany2022curriculum}.  LLM post-training work often emphasizes instruction data, mixture design, or preference optimization \citep{ouyang2022training,rafailov2023dpo}; we study the optimizer-state order channel inside such local fine-tuning comparisons, rather than an ordering or mixture-selection policy.  Empirically, fine-tuning outcomes vary with data seed and order \citep{dodge2020finetuning,chen2024order}, training-order recency is linearly encoded in trained activations \citep{krasheninnikov2025fresh}, and optimizer-state memory has been measured to carry this order sensitivity---amplified by momentum and collapsing when the optimizer state is reset \citep{sevetlidis2026processtensor}.  We add a local fixed-clock exponent and order-variance scale for that optimizer-state path dependence.  A parallel optimization literature studies random reshuffling and without-replacement SGD---convergence-rate and optimal-permutation analyses \citep{gurbuzbalaban2021reshuffling,haochen2019random,ahn2020shuffling,shamir2016without,rajput2020closing,lu2022grab} and adversarial data-ordering attacks \citep{shumailov2021ordering}.  The reshuffling literature concerns convergence rates or beneficial/harmful example orders for SGD-type updates over a full pass; we instead study local equal-multiset order contrasts and how fixed-clock optimizer state shifts the small-$\eta$ exponent.  The closest variance analysis there computes the stationary iterate variance from gradient noise \citep{domingoenrich2022variance}, not the order-orbit variance at fixed data that our floor describes.

\paragraph{Optimizer state, adaptivity, and continuous-time limits.}
Adam/AdamW and related adaptive optimizers are standard in fine-tuning \citep{kingma2015adam,loshchilov2019decoupled,duchi2011adaptive,reddi2018convergence,shazeer2018adafactor,gupta2018shampoo}, as are sign- and orthogonalization-based updates such as Lion \citep{chen2023symbolic} and Muon \citep{jordan2024muon}.  Prior analyses focus on convergence, implicit bias, scaling rules, and SDE or continuous-time approximations \citep{mandt2017sgd,li2017stochastic,malladi2022sde,li2025sharpness}.  Here we study a local algebraic question: with a fixed optimizer state and clock, what changes when two equal-multiset data blocks are swapped?  The mechanism is broader than adaptivity; any non-flat fixed-clock replay can change the exponent class.  Continuous-time limits are complementary precisely because they \emph{match the clock} by construction: the SDE derivations co-scale the moment decay with the step size (the adaptive square-root scaling rule sets $1-\beta_2=\Theta(\eta^2)$, and the momentum drift forces $1-\beta_1=\Theta(\eta)$) and carry the bias-correction counter as the continuous-time variable $\tau=\eta k$ \citep{li2017stochastic,malladi2022sde}.  In our notation the normalized impulse weights then flatten as $\eta\to0$ and the first-order contrast vanishes, so the limit sits in the matched-clock regime rather than a short finite-clock AdamW window.  SDE and continuous-time limits therefore correspond to the matched-clock regime; the fixed-clock channel is a finite-clock regime that disappears when the optimizer clock is scaled with $\eta$.

\paragraph{Fine-tuning geometry and attribution.}
Hessian-vector products enter only as local diagnostics, following the Pearlmutter trick \citep{pearlmutter1994fast}, to estimate contractions such as $\inner{g_E}{P_0H_DP_0g_j}$ that control transport and pairwise-order cycles.  The LoRA setting \citep{hu2022lora} is related to influence and kernel views of fine-tuning \citep{malladi2023kernel}, but our focus is optimizer-state path dependence rather than per-example influence.  Scaling laws relate loss to parameters, data, and compute \citep{kaplan2020scaling,hoffmann2022training}; Pythia, Qwen, Gemma, and Llama provide model families \citep{biderman2023pythia,yang2025qwen3,gemma2024gemma2,grattafiori2024llama3}, while our local variables are optimizer-state radius and clock class.

\section{Discussion}
\label{sec:discussion}

Fixed-clock optimizer state changes the local role of order: a regular optimizer's first-order response to an equal multiset is a flat sum, while fixed-clock state can make replay position-dependent.  In this sense the memoryless bracket picture is the commutative limit: it applies when first-order impulse weights are flat, and becomes incomplete when fixed-clock state makes them non-flat; the matched-clock control co-scales the memory and normalization clocks so that this flat-profile limit returns.

\paragraph{Scope.}
The scope is local to a finite measurement window.  State should be recomputed before applying the expansion elsewhere in training.  The closed-form scalar potential, the leading variance coefficient, and the cycle theorem are exact for the frozen-preconditioner specialization; the order-variance \emph{floor} interpretation of that coefficient is asymptotic, not a pointwise finite-$\eta$ bound.  Full frozen-gradient AdamW is covered by Theorem~\ref{thm:fixed-clock-singularity} and by replay measurements, but the $v$-dependent preconditioner can add first-order effects outside the scalar-potential form; this is why $\chi$ is reported.  Most diagnostic rows are single-seed local cells, so the strongest conclusions are coefficient structure, class separation, and validity ranking, not parameter-count scaling laws.

\paragraph{Finite-horizon accounting.}
If the expansion is recomputed window by window, a run of $N_{\mathrm{w}}$ windows has the first-variation form
\begin{equation}
\label{eq:finite-horizon}
\Delta_{\mathrm{end}}=\eta\sum_{p=1}^{N_{\mathrm{w}}}G_p\zeta_p+O(N_{\mathrm{w}}\eta^2),
\end{equation}
where $\zeta_p$ is the window's frozen-state order coefficient and $G_p$ transports that displacement to the endpoint readout.  Independently reshuffled windows give random-walk scale $O(\eta\sqrt{N_{\mathrm{w}}})$ under mean-zero weak dependence; coherently ordered windows can drift as $O(\eta N_{\mathrm{w}})$ while the local regime remains valid and the suffix maps are contractive or geometrically mixing (mere non-expansiveness is not enough).  Matched-clock memory replaces the leading $\eta$ by $\eta^2$.  Proposition~\ref{prop:finite-horizon} formalizes this averaging-versus-accumulation accounting.

\paragraph{Practical use.}
The shuffle-seed budget is an error-bar calculation for the order component: gaps small relative to $\sigma_{\mathrm{ord}}$ can be underpowered, while large gaps are unlikely to flip.

\bibliography{references}
\bibliographystyle{tmlr}

\clearpage
\appendix
\section{Proofs and Derivations}
\label{app:proofs}

\subsection{Regular commutator baseline}
Let $F_D^\eta=I+\eta X_D+\eta^2Y_D+O(\eta^3)$, uniformly on a neighborhood of the measurement state.  For a vector field $X$, write $\mathrm{D}X(z)[h]$ for its derivative at $z$ applied to direction $h$.  Expanding the two compositions gives
\begin{align}
F_B^\eta(F_A^\eta(z))
&=z+\eta(X_A+X_B)(z)
  +\eta^2\{Y_A+Y_B+\mathrm{D}X_B[X_A]\}(z)+O(\eta^3),\\
F_A^\eta(F_B^\eta(z))
&=z+\eta(X_A+X_B)(z)
  +\eta^2\{Y_A+Y_B+\mathrm{D}X_A[X_B]\}(z)+O(\eta^3).
\end{align}
With the bracket convention $[X_B,X_A]:=\mathrm{D}X_B[X_A]-\mathrm{D}X_A[X_B]$, subtracting yields
\begin{equation}
    \eta^2\{\mathrm{D}X_B[X_A]-\mathrm{D}X_A[X_B]\}(z)+O(\eta^3)
    =\eta^2[X_B,X_A](z)+O(\eta^3).
\end{equation}

\subsection{Fixed-clock singularity and nondegeneracy}
\label{app:fixed-clock-nondegen}
This subsection gives the proof of Theorem~\ref{thm:fixed-clock-singularity}.  Fix a finite word length $K$ and assume the maps in Eq.~\eqref{eq:lifted-map} are $C^1$ and uniformly bounded on a compact neighborhood of the measurement state.  Let $C$ denote a generic constant independent of $\eta$ and of the word.

First, $\|\theta_t-\theta_0\|\le C\,t\eta$ for all $t\le K$, because every parameter increment is $O(\eta)$.  Let $e_t=\|\xi_t(w)-\xi_t^0(w)\|$.  Lipschitz continuity of $G_D$ (constant $L_G$) gives
\begin{equation}
    e_t\le L_G e_{t-1} + L_G\|\theta_{t-1}-\theta_0\| \le L_G e_{t-1}+C\eta,
\end{equation}
so $e_t=O(\eta)$ for fixed $K$ by induction.  Now write the endpoint update as
\begin{equation}
    \theta_K(w)-\theta_0
    =\eta\sum_{t=1}^{K}U_{w_t}(\xi_t(w),\theta_{t-1}(w))+O(\eta^2).
\end{equation}
Replacing $(\xi_t(w),\theta_{t-1}(w))$ by $(\xi^0_t(w),\theta_0)$ changes each summand by $O(\eta)$ and therefore the endpoint by $O(\eta^2)$, proving Eq.~\eqref{eq:lifted-expansion}.  For AdamW, the Lipschitz constant of $U_D$ used in this replacement is finite precisely when the warmed preconditioner coordinates satisfy $\hat v\ge\delta>0$ (the $\epsilon$ in $\sqrt{\hat v}+\epsilon$ bounds the value of the map, not its derivative); where that fails, the $C^1$ argument does not apply, and the theorem applies only on the smooth region of the finite path.  Subtracting two word expansions gives Eq.~\eqref{eq:lifted-contrast}.  Assumption~\ref{assump:nongen} then gives a $\Theta(\eta)$ readout.

The older linear-memory example is the special case $\xi=y$, $G_D(y,\theta)=\mathcal{M}y+\mathcal{B}g_D(\theta)$, and $U_D(y,\theta)=-\mathcal{C}_ty$.  Setting $\eta=0$ in the parameter equation while retaining the memory recursion gives
\begin{equation}
    y_t^{0}(w)=\mathcal{M}^t y_0+\sum_{r\le t}\mathcal{M}^{t-r}\mathcal{B} g_{w_r}(\theta_0),
\end{equation}
with first parameter coefficient
\begin{equation}
    \theta_K^{[1]}(w)=-\sum_{t=1}^{K} \mathcal{C}_{t-1} y_t^{0}(w).
\end{equation}
For two equal-multiset words, common terms depending only on $y_0$ cancel, leaving a linear combination of the frozen domain gradients.  Vanishing of this coefficient is a proper linear/algebraic constraint unless the local model family is contained in that constraint set.

For the regular-family clause, if every $F_D^\eta=I+\eta X_D+O(\eta^2)$ on a fixed state space, then composing along $w$ gives $F_w=I+\eta\sum_{r=1}^K X_{w_r}+O(\eta^2)$.  The first-order term is linear in the generators and each generator enters with unit coefficient regardless of its position --- flat first-order impulse weights.  Hence the first-order term depends only on the multiset, every equal-multiset first-order contrast form vanishes identically for regular families, and the leading order effect is the $O(\eta^2)$ commutator of the regular expansion.

\subsection{Order-variance floor: proof and optimizer-class cases}
\label{app:variance-floor-proof}
This subsection proves Theorem~\ref{thm:variance-floor} and computes its kernel contrast
factor $V_K=\sum_{r=1}^{K}(W_{r,K}^{(s,\beta)}-\bar{W})^2$ for the three optimizer classes,
together with the fully closed-form two-domain case.  We present it after the flat-weights
derivation because the regular-optimizer case rests on that cancellation.

\paragraph{Proof of Theorem~\ref{thm:variance-floor}.}
By Propositions~\ref{prop:first-order-kernel} and~\ref{prop:second-order-transport}, the
scalar readout of a word $w$ expands as
\begin{equation}
    \inner{g_E}{\theta_K(w)-\theta_0}=\eta L(w)+\eta^2Q(w)+O(\eta^3),
    \qquad
    L(w)=-\inner{g_E}{A_KP_0m_0}-\sum_{r=1}^{K}a_r\phi_{w_r},
\end{equation}
with $a_r=W_{r,K}^{(s,\beta)}$, $\phi_i=\inner{g_E}{P_0g_i}$, and $Q$ the second-order
word-coefficient readout.  The warmed-buffer term is word-independent
(Proposition~\ref{prop:first-order-kernel}), so it cancels from every centered orbit moment
and $L$ may be replaced by the linear permutation statistic $-\sum_ra_r\phi_{w_r}$.

For a uniform random permutation $w$ of the fixed multiset, each position is marginally
uniform over the $K$ values, so $\operatorname{Var}(\phi_{w_r})=\sigma_\phi^2$ with
$\sigma_\phi^2=\frac{1}{K}\sum_i(\phi_i-\bar\phi)^2$; and for $r\ne t$, sampling without
replacement gives the finite-population identity
$\operatorname{Cov}(\phi_{w_r},\phi_{w_t})=-\sigma_\phi^2/(K-1)$.  Hence
\begin{equation}
    \operatorname{Var}_w\Bigl[\sum_{r=1}^{K}a_r\phi_{w_r}\Bigr]
    =\sigma_\phi^2\Bigl[\sum_ra_r^2-\frac{1}{K-1}\Bigl\{\Bigl(\sum_ra_r\Bigr)^2-\sum_ra_r^2\Bigr\}\Bigr]
    =\sigma_\phi^2\,\frac{K}{K-1}\sum_{r=1}^{K}(a_r-\bar a)^2,
\end{equation}
the classical Wald--Wolfowitz/Hoeffding linear permutation-statistic variance
\citep{waldwolfowitz1944,hoeffding1951}.  The
variance of the linear first-order statistic $L$ is therefore exact, not merely leading-order; the endpoint variance then carries the higher-order expansion, and
\begin{equation}
    \operatorname{Var}_w\bigl[\eta L+\eta^2Q+O(\eta^3)\bigr]
    =\eta^2\operatorname{Var}_w(L)+2\eta^3\operatorname{Cov}_w(L,Q)+O(\eta^4)
\end{equation}
gives Eq.~\eqref{eq:variance-floor} with its named remainder.

\paragraph{Regular optimizers.}
By the regular-family expansion of Appendix~\ref{app:fixed-clock-nondegen}, the first-order
impulse weights are flat: every generator enters the first-order term with unit coefficient
at every position, so $a_r\equiv\bar a$ and $V_K=0$ identically.  The first-order orbit
variance vanishes for every multiset and readout, and the leading order-induced variance is
carried by the $O(\eta^2)$ bracket term: $\operatorname{Var}_w=O(\eta^4)$.

\paragraph{Matched-clock control.}
Under the matched clock ($\beta=e^{-a\eta}$, $s_0=T_c/\eta$) the de-biasing denominator is
saturated: $\mathcal{D}=1-\beta^{s_0+t}=1-e^{-aT_c}\bigl(1+O(a\eta t)\bigr)$ is
$t$-independent to leading order.  Then
\begin{equation}
    W_{r,K}=\frac{1-\beta^{K-r+1}}{\mathcal{D}}
    =\frac{a\eta(K-r+1)}{\mathcal{D}}\Bigl(1-\frac{a\eta(K-r+1)}{2}+\cdots\Bigr),
\end{equation}
affine in position to leading order, so
\begin{equation}
    V_K=\frac{(a\eta)^2(K^3-K)}{12\,\mathcal{D}^2}\bigl(1+O(a\eta K)\bigr)=\Theta(\eta^2),
\end{equation}
placing the matched-clock first-order orbit variance at $\Theta(\eta^4)$, in the regular
class.  At the locked control parameters ($a=52.36$, $T_c=5.0$; Appendix~\ref{app:eta-slope-protocol}) the leading formula gives $V_K=2.330\times10^{-3}$ at
$\eta=5\times10^{-5}$ against the exact $V_K=2.2289\times10^{-3}$, and the $O(a\eta K)$
correction accounts for the slope depression observed at the large-$\eta$ end of the grid.

\paragraph{Fixed-clock AdamW.}
At the experimental configuration ($K=16$, $s=100$, $\beta_1=0.9$) the impulse weights fall
monotonically from $W_1=0.815$ to $W_{16}=0.100$ and $V_K=0.757=\Theta(1)$, so the
first-order orbit variance is $\Theta(\eta^2)$: the dichotomy of
Theorem~\ref{thm:variance-floor}.

\paragraph{Two-domain orbits in closed form.}
For balanced two-domain multisets $A^{K/2}B^{K/2}$ the linear statistic depends on $w$ only
through $S_A(w)=\sum_{r\in A\text{-positions}}a_r$, so the entire orbit distribution is the
distribution of $S_A$ over uniform $(K/2)$-subsets of positions.  In the two-point
idealization (every $A$ block at potential $\phi_A$, every $B$ block at $\phi_B$), the
monotone kernel and the rearrangement argument of
Appendix~\ref{app:potential-sorting} make the domain-sorted words the orbit extremes, with
range $\eta\,|\phi_A-\phi_B|\,S_{K,s,\beta}$ --- exactly the balanced-block contrast of
Eq.~\eqref{eq:balanced-c1} --- and dimensionless spread
\begin{equation}
    \mathrm{SD}/\mathrm{range}
    =\frac{\sqrt{K^2\sigma_W^2/\bigl(4(K-1)\bigr)}}{S_{K,s,\beta}},
    \qquad \sigma_W^2=V_K/K,
\end{equation}
fully determined by $(K,s,\beta_1)$: $0.1539$ at the experimental configuration.  With
within-domain batch-level spread of the $\phi_i$, the domain-sorted words are no longer the
exact orbit extremes and the ratio drifts upward (simulation: $+12\%$ at spread
$0.25\times$ the domain gap), which is the direction observed at orbit level in
Appendix~\ref{app:orbit-details}.
\subsection{\texorpdfstring{Frozen-$P$}{Frozen-P} first-order kernel}
Unrolling
\begin{equation}
    m_t=\beta m_{t-1}+(1-\beta)g_{w_t}
\end{equation}
with frozen gradients gives
\begin{equation}
    m_t=\beta^t m_0+(1-\beta)\sum_{r=1}^{t}\beta^{t-r}g_{w_r}.
\end{equation}
Substituting into
\begin{equation}
    \theta_K-\theta_0=-\eta\sum_{t=1}^{K}\frac{1}{1-\beta^{s+t}}P_0m_t
\end{equation}
yields Eq.~\eqref{eq:c1-word}.  For balanced blocks, the $m_0$ term cancels and direct summation gives Eq.~\eqref{eq:balanced-c1}.

\subsection{\texorpdfstring{Frozen-$P$}{Frozen-P} second-order transport}
\label{app:transport-weight-decay}
Write $g_D(\theta)=g_D+H_D(\theta-\theta_0)+O(\norm{\theta-\theta_0}^2)$ and expand $\theta_t=\theta_0+\eta x_t+\eta^2 y_t+O(\eta^3)$.  The first-order prefix before step $t$ is
\begin{equation}
    x_{t-1}=-\sum_{r<t}W_{r,t-1}P_0g_{w_r}-A_{t-1}P_0m_0
\end{equation}
when $\lambda=0$.  With decoupled weight decay, two $O(\eta^2)$ pieces appear.  First, the decay map scales the first-order displacement by $-\lambda$; this is included in the full frozen-gradient replay $\tau_{\mathrm{Adam}}$ because that replay evolves $\theta$ under decay.  Second, the prefix at which $H_{w_t}$ acts shifts by $-\lambda(t-1)\theta_0$; this is a live-gradient transport effect and belongs in the curvature residual $\Delta^{(1)}$.  Adding that prefix gives the weight-decay form of the coefficient.  In the fixed-target curl experiment, the reported $\lambda=0.01$ prefix contribution is conservative and telescopes out of the triangle-cycle measure.  Third derivatives contribute $O(\eta^3)$ to parameter displacement for fixed $K$.

\subsection{\texorpdfstring{Frozen-$P$}{Frozen-P} potential and sorting}
\label{app:potential-sorting}
For domains $i,j$, the balanced-block first-order edge has the form $e_{ij}^{(1)}=C(\phi_j-\phi_i)$ for $C>0$ and $\phi_i=\inner{g_E}{P_0g_i}$, so directed triangle sums telescope to zero.  For a fixed multiset, the first-order objective is $\sum_r W_{r,K}\phi_{w_r}$ with monotone weights $W_{r,K}$.  The rearrangement inequality gives both extrema: pairing largest potentials with largest weights maximizes this sum, and pairing them with smallest weights minimizes it.

\subsection{Second-order edge decomposition and triangle cycles}
\label{app:edge-decomp}
Let $K=2h$ and write $w_{ij}=i^h j^h$.  For any word $w$, Proposition~\ref{prop:second-order-transport} gives the scalar coefficient
\begin{equation}
    Y^{(1)}(w)=\sum_{D,q} C_{D,q}(w)M_{D,q}+\sum_D C_{D,0}(w)M_{D,0},
\end{equation}
where
\begin{equation}
    C_{D,q}(w)=\sum_{t:w_t=D}\sum_{r<t:w_r=q} W_{t,K}W_{r,t-1},
    \qquad
    C_{D,0}(w)=\sum_{t:w_t=D}W_{t,K}A_{t-1}.
\end{equation}
For $w_{ij}$, define
\begin{align}
    C_{ff}&=\sum_{t\le h}W_{t,K}\sum_{r<t}W_{r,t-1},\qquad
    C_{ss}=\sum_{t>h}W_{t,K}\sum_{h<r<t}W_{r,t-1},\\
    C_{\times}&=\sum_{t>h}W_{t,K}\sum_{r\le h}W_{r,t-1},\\
    C_{m,f}&=\sum_{t\le h}W_{t,K}A_{t-1},
    \qquad
    C_{m,s}=\sum_{t>h}W_{t,K}A_{t-1}.
\end{align}
Then
\begin{align}
Y^{(1)}(w_{ij})&=C_{ff}M_{i,i}+C_{\times}M_{j,i}+C_{ss}M_{j,j}
    +C_{m,f}M_{i,0}+C_{m,s}M_{j,0},\\
Y^{(1)}(w_{ji})&=C_{ff}M_{j,j}+C_{\times}M_{i,j}+C_{ss}M_{i,i}
    +C_{m,f}M_{j,0}+C_{m,s}M_{i,0}.
\end{align}
Subtracting gives Eq.~\eqref{eq:edge-decomp-main} with
\begin{equation}
    a_K=C_{ff}-C_{ss},\qquad b_K=C_{\times},\qquad c_K=C_{m,f}-C_{m,s}.
\end{equation}
In a triangle $i,j,k$, the self-curvature terms telescope:
\begin{equation}
    (M_{i,i}-M_{j,j})+(M_{j,j}-M_{k,k})+(M_{k,k}-M_{i,i})=0,
\end{equation}
and the warm-buffer terms telescope similarly.  Therefore the directed triangle sum (the discrete curl/circulation) is exactly Eq.~\eqref{eq:curl-main}.  If $P_0=I$, the antisymmetric term reduces to $b_K$ times the standard projected bracket $\inner{g_E}{H_jg_i-H_ig_j}$. The finite-clock coefficient $C_{\times}$ is the reason the raw contraction asymmetry should not be interpreted without clock weights; in the measured $\beta_1=0.9,s=100$ range, the cross coefficient is several times larger than the self-potential coefficient, motivating fixed-target triple tests with a common $g_E$.

\subsection{\texorpdfstring{Full-$v$}{Full-v} AdamW tangent coefficient}
\label{app:fullv-tangent}
This subsection gives the full AdamW second-order residual coefficient relative to full frozen-gradient AdamW replay.  Let
\begin{equation}
    b_t=1-\beta_1^{s+t},\qquad c_t=1-\beta_2^{s+t},
\end{equation}
and
\begin{equation}
    R_t(v)=\operatorname{Diag}\left(\frac{1}{b_t(\sqrt{v/c_t}+\epsilon)}\right).
\end{equation}
At $\eta=0$, frozen gradients evolve
\begin{align}
    m_t^0&=\beta_1m_{t-1}^0+(1-\beta_1)g_{D_t},\\
    v_t^0&=\beta_2v_{t-1}^0+(1-\beta_2)g_{D_t}^{\odot2}.
\end{align}
The first parameter coefficient of frozen replay is
\begin{equation}
    x_t=x_{t-1}-R_t(v_t^0)m_t^0-\lambda\theta_0,
    \qquad x_0=0.
\end{equation}
The live first perturbations obey
\begin{align}
    \dot g_t&=H_{D_t}x_{t-1},\\
    \dot m_t&=\beta_1\dot m_{t-1}+(1-\beta_1)\dot g_t,\\
    \dot v_t&=\beta_2\dot v_{t-1}+2(1-\beta_2)g_{D_t}\odot\dot g_t.
\end{align}
Coordinatewise,
\begin{equation}
    r'_{t,i}(v_i)=-\frac{1}{2b_t c_t\sqrt{v_i/c_t}(\sqrt{v_i/c_t}+\epsilon)^2}.
\end{equation}
Then the second-order live residual beyond frozen-gradient replay satisfies
\begin{equation}
    z_t=z_{t-1}-R_t(v_t^0)\dot m_t-\bigl(r'_t(v_t^0)\odot\dot v_t\odot m_t^0\bigr),
    \qquad z_0=0.
\end{equation}
Thus for readout $g_E$,
\begin{equation}
    \inner{g_E}{\theta_K^{\mathrm{live}}(w)-\theta_K^{\mathrm{FG}}(w)}=\eta^2\inner{g_E}{z_K(w)}+O(\eta^3).
\end{equation}
This derivation assumes the warmed second-moment coordinates used by the preconditioner are bounded away from zero; otherwise the square-root map needs a directional nonsmooth expansion.

\subsection{Backward-sensitivity (costate) recursion}
\label{app:costate}
The backward-sensitivity recursion is the standard costate form in elementary terms: a backward pass accumulates the sensitivity of the scalar endpoint readout to each intermediate optimizer state.  The scalar coefficient can be computed with an $O(K)$ HVP recursion.  Store $m_t^0,v_t^0,x_{t-1},R_t(v_t^0)$ and $r'_t(v_t^0)$ from the frozen replay.  Run backward with $\mu_{K+1}=\nu_{K+1}=0$:
\begin{align}
    \mu_u&=(1-\beta_1)R_u(v_u^0)g_E+\beta_1\mu_{u+1},\\
    \omega_u&=r'_u(v_u^0)\odot m_u^0\odot g_E,\\
    \nu_u&=\omega_u+\beta_2\nu_{u+1},\\
    \alpha_u&=\mu_u+2(1-\beta_2)g_{D_u}\odot \nu_u.
\end{align}
Then
\begin{equation}
    \inner{g_E}{z_K(w)}=-\sum_{u=1}^{K}\inner{\alpha_u}{H_{D_u}x_{u-1}}.
\end{equation}
Replacing $R_t(v_t^0)$ by $P_0/(1-\beta_1^{s+t})$ and setting $r'_t=0$ reduces this recursion to the frozen-$P$ coefficient in Proposition~\ref{prop:second-order-transport}.

\subsection{Trust-region coefficient scaling}
The trust-region ratio compares a second-order coefficient to a first-order coefficient.  The second-order coefficient contains products of prefix and endpoint weights,
\begin{equation}
    C^{(2)}_{D,j}(K)=\sum_{t=1}^{K}\sum_{r<t} W_{t,K}^{(s,\beta)}W_{r,t-1}^{(s,\beta)}\mathbf{1}\{w_t=D,w_r=j\},
\end{equation}
plus the analogous warm-momentum coefficient.  The experiments use these exact finite-$K$ coefficients.  The asymptotic $K^2$ summary describes their growth relative to the first-order coefficient; it is not a replacement for the exact computation.  Deviations from exponent two are expected when the first-order denominator approaches zero, when the finite-memory coefficient is not saturated, or when local curvature prefactors vary strongly across domain pairs.

\subsection{Finite-horizon accounting: diffusion versus drift}
\label{app:finite-horizon}
This subsection makes the Discussion's finite-horizon accounting precise.  The result is an accounting identity over windows in which the per-window local expansion of Theorem~\ref{thm:fixed-clock-singularity} holds.  The identity tracks how the per-window first-order channel composes to the endpoint and how that composition scales under different order policies, rather than giving endpoint control over a full run.

\begin{proposition}[Finite-horizon accounting]
\label{prop:finite-horizon}
Split a run into $N_{\mathrm{w}}$ windows of length $K$, and suppose that in each window $p$ the frozen-state equal-multiset order contrast is $\eta\,\zeta_p$ with the per-window remainder of Theorem~\ref{thm:fixed-clock-singularity}, that the suffix map transporting window $p$ to the endpoint readout has bounded sensitivity $G_p$, and that $\|G_p\|,\|\zeta_p\|\le B$ uniformly with per-window remainders $O(\eta^2)$.  Assume further that the suffix response is \emph{summable}: window $q$'s coefficient depends on an earlier window $p$'s displacement with a sensitivity that decays in $q-p$, so that $\sum_{q>p}\lVert \mathrm{D}\zeta_q\cdot G_{p\to q}\rVert\le C$ uniformly in $p$ (contractive or geometrically mixing suffix maps; mere non-expansiveness is \emph{not} enough, as noted below).  Then the endpoint order contrast satisfies
\begin{equation}
\Delta_{\mathrm{end}}=\eta\sum_{p=1}^{N_{\mathrm{w}}}G_p\zeta_p+\mathcal{E},\qquad \|\mathcal{E}\|\le CN_{\mathrm{w}}\eta^2,
\end{equation}
with $C$ depending only on $B$ and the uniform Lipschitz constants.  Consequently:
\begin{enumerate}
\item \emph{(Diffusion.)} If $\{G_p\zeta_p\}$ has mean zero with uniformly summable autocovariances ($\sup_p\sum_q\lvert\operatorname{Cov}(G_p\zeta_p,G_q\zeta_q)\rvert\le C'$)---the mean-zero weak-dependence model for independently reshuffled windows, of which uncorrelated (martingale-difference) increments are the special case---then $\mathbb{E}\,\Delta_{\mathrm{end}}=O(N_{\mathrm{w}}\eta^2)$ and $\operatorname{Var}(\Delta_{\mathrm{end}})=O(\eta^2N_{\mathrm{w}})$, so the centered root-mean-square endpoint contrast is $O(\eta\sqrt{N_{\mathrm{w}}})$.  At $N_{\mathrm{w}}\sim1/\eta$ this is $O(\sqrt{\eta})\to0$.
\item \emph{(Drift.)} If the transported coefficients have nonzero Ces\`aro mean, $\frac1{N_{\mathrm{w}}}\sum_{p=1}^{N_{\mathrm{w}}}\mathbb{E}[G_p\zeta_p]\to\mu\neq0$---the model for repeated packed order, a sorted curriculum, no reshuffling, or a fixed domain-order skew---then $\mathbb{E}\,\Delta_{\mathrm{end}}=\eta N_{\mathrm{w}}\mu+o(\eta N_{\mathrm{w}})$, which is formally $O(1)$ at the regime boundary $N_{\mathrm{w}}\sim1/\eta$.
\item \emph{(Matched clock.)} Under the matched-clock control the per-window leading coefficient is $O(\eta^2)$ rather than $O(\eta)$, so each bound gains one power of $\eta$: diffusion becomes $O(\eta^{3/2})$ and drift $O(\eta)$ at $N_{\mathrm{w}}\sim1/\eta$.
\end{enumerate}
\end{proposition}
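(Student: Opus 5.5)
The proof is a telescoping (Lindeberg-type) argument over windows, followed by elementary moment bounds for the three order policies. Compare the two runs (word order $w$ versus $w'$ in every window) through a chain of hybrid runs. Hybrid $H_p$ uses the $w'$-ordering in windows $1,\dots,p$ and the $w$-ordering afterwards, so that $\Delta_{\mathrm{end}}=\sum_{p=1}^{N_{\mathrm{w}}}\bigl(\mathrm{end}(H_{p-1})-\mathrm{end}(H_p)\bigr)$. Consecutive hybrids differ only in window $p$.

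\emph{Window swap.} By Theorem~\ref{thm:fixed-clock-singularity} applied at the state entering window $p$, swapping the order inside that window changes the window's exit lifted state by $\eta\zeta_p+O(\eta^2)$, with a remainder uniform in the word.

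\emph{Transport to the endpoint.} The suffix map carries this displacement to the endpoint readout. A first-order Taylor expansion of the suffix map gives $\eta G_p\zeta_p+O(\eta^2)$, where the $O(\eta^2)$ collects the quadratic Taylor term, controlled by $B$ and the Lipschitz constants.

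\emph{The main obstacle.} A naive sum gives $N_{\mathrm{w}}$ per-window errors, but this is not yet $CN_{\mathrm{w}}\eta^2$. The quantities $G_p$ and $\zeta_q$ are evaluated along a hybrid trajectory, whereas the statement uses the reference trajectory. Those trajectories differ by the accumulated displacement from the already-swapped windows $1,\dots,p-1$. That drift is at most $O(\eta p)$, which with only non-expansiveness would leave a cross error of order $\eta^2N_{\mathrm{w}}^2$.

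\emph{Handling it with summability.} This is where the summability hypothesis enters. Expand the difference between window $q$'s coefficient on the hybrid and on the reference trajectory to first order as $\sum_{p<q}\mathrm{D}\zeta_q\cdot G_{p\to q}\,\eta\zeta_p$. Summing over $q$ and exchanging the order of summation gives the bound $\eta^2\sum_p\|\zeta_p\|\sum_{q>p}\|\mathrm{D}\zeta_q G_{p\to q}\|\le \eta^2N_{\mathrm{w}}BC$. The same device controls the $G_p$ evaluated on hybrids. I expect this bookkeeping, namely verifying that every cross term is captured by the summable sensitivity rather than by a worst-case product, to be the hardest step. I would write it as an explicit discrete Duhamel/variation-of-constants identity.

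\emph{The three consequences.} Each follows by taking moments of $\eta\sum_p G_p\zeta_p$, treating $\mathcal{E}$ as deterministically $O(N_{\mathrm{w}}\eta^2)$.

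\emph{Diffusion.} Mean zero gives $\mathbb{E}\Delta_{\mathrm{end}}=\mathbb{E}\mathcal{E}=O(N_{\mathrm{w}}\eta^2)$. The variance of the sum is $\eta^2\sum_{p,q}\operatorname{Cov}\le \eta^2N_{\mathrm{w}}C'$. Cross terms with $\mathcal{E}$ contribute only higher-order amounts, by Cauchy--Schwarz. Setting $N_{\mathrm{w}}\sim1/\eta$ gives an RMS contrast of $O(\sqrt\eta)$.

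\emph{Drift.} Linearity and the Ces\`aro hypothesis give $\eta N_{\mathrm{w}}(\mu+o(1))$. The $\mathcal{E}$ term is $O(N_{\mathrm{w}}\eta^2)=o(\eta N_{\mathrm{w}})$.

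\emph{Matched clock.} Replace $\eta\zeta_p$ by $\eta^2\zeta_p$ with $\zeta_p$ bounded, using the $V_K=\Theta(\eta^2)$ flattening from Appendix~\ref{app:variance-floor-proof}. Rerunning the same estimates multiplies each bound by $\eta$. The remainder must be tracked at $O(\eta^3)$ per window, which needs one more order of the local expansion (Proposition~\ref{prop:second-order-transport}).
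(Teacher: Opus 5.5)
\textbf{Verdict: genuine gap in the window-swap step.} Your overall structure is the paper's: telescoping over windows, the summable-response hypothesis to keep cross-window corrections linear in $N_{\mathrm{w}}$ (with the same non-expansive $N_{\mathrm{w}}^2\eta^2$ obstruction), then the same moment computations. The hybrid chain $H_p$ is a cleaner way to organize this.

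The failing step is the claim that swapping the order inside window $p$ changes the exit \emph{lifted} state by $\eta\zeta_p+O(\eta^2)$. Theorem~\ref{thm:fixed-clock-singularity} controls only the parameter block $\theta_K$. Its own proof gives $\xi_K(w)=\xi^0_K(w)+O(\eta)$, and the frozen replay $\xi^0_K(w)$ depends on the order at order one. For momentum, $m_K(w)-m_K(w')=(1-\beta)\sum_r\beta^{K-r}(g_{w_r}-g_{w'_r})$, and AdamW's $v$ offset is likewise $\eta$-independent. So the exit states of $H_{p-1}$ and $H_p$ differ by $(\eta\Lambda_p+O(\eta^2),\,\delta\xi_p)$ with $\delta\xi_p=O(1)$. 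Because $\xi$ reaches the readout through the $\eta U$ steps of the suffix, $\delta\xi_p$ contributes an $O(\eta)$ term to $\mathrm{end}(H_{p-1})-\mathrm{end}(H_p)$. That is the same order as the term you keep. It is not a quadratic Taylor remainder, and a linearization in an assumed $O(\eta)$ displacement does not capture it; for AdamW's $v$ it even enters nonlinearly.

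This matters quantitatively. Take normalized fixed-$\beta$ momentum with constant domain gradients. Exact summation gives $\mathrm{end}(H_{p-1})-\mathrm{end}(H_p)=\eta\,\beta^{(N_{\mathrm{w}}-p)K}\Lambda_p$: the buffer spillover cancels the in-window contrast after the memory time, because each gradient's total weight is order-independent. Your bookkeeping, with parameter-block transport $G_p=I$ here, instead yields $\eta\sum_p\Lambda_p$, which is off by $\Theta(\eta N_{\mathrm{w}})$ under repeated order.

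The same issue breaks your hybrid-to-reference conversion. The hybrid and reference runs differ by $O(1)$ optimizer offsets at every window boundary, not by a drift of size $O(\eta p)$. Window coefficients that depend on the entering optimizer state, such as full AdamW through its preconditioner, then shift at order one rather than by $\mathrm{D}\zeta_q\cdot G_{p\to q}\,\eta\zeta_p$.

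The repair has three parts:
\begin{enumerate}
\item Carry $\delta\xi_p$ exactly. Buffer offsets decay geometrically (rates $\beta_1,\beta_2$) and the step counters agree, so its readout effect is $\eta$ times a bounded coefficient.
\item Define the window's transported coefficient as $\eta^{-1}$ times the full leading-order readout effect of $(\eta\Lambda_p,\delta\xi_p)$, that is, parameter transport plus spillover.
\item Either state the identity with hybrid-evaluated coefficients, or restrict the conversion to quantities insensitive to the entering buffer.
\end{enumerate}
The three consequences then follow as you wrote them, with their hypotheses imposed on the corrected coefficients. Note that in the momentum example the Ces\`aro mean of these coefficients is zero. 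The paper's one-line sketch also treats the per-window displacements as first order and does not resolve this point.
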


\begin{proof}
The identity is a telescoping sum of the per-window first-order displacements composed with their bounded suffix transports.  The remainder collects $N_{\mathrm{w}}$ per-window $O(\eta^2)$ terms together with the cross-window products of first-order displacements, each $O(\eta^2)$ under the uniform bounds; under the summable-response assumption these corrections sum to the linear-in-$N_{\mathrm{w}}$ bound $\|\mathcal{E}\|\le CN_{\mathrm{w}}\eta^2$.  Non-expansiveness alone does not suffice: window $q$ inherits a prefix-displacement shift of order $\eta q$---the sum of $q$ earlier $O(\eta)$ displacements, transported without magnification but not cancelled---so its coefficient acquires an $O(\eta q)$ correction and contributes $O(\eta^2 q)$, summing to $\|\mathcal{E}\|\le CN_{\mathrm{w}}^2\eta^2$; the linear bound therefore needs contraction or summable memory, not non-expansiveness.  For (1), mean-zero increments give $\mathbb{E}\sum_pG_p\zeta_p=0$, and uniformly summable autocovariances give $\operatorname{Var}\bigl(\sum_pG_p\zeta_p\bigr)=\sum_{p,q}\operatorname{Cov}(G_p\zeta_p,G_q\zeta_q)\le N_{\mathrm{w}}\sup_p\sum_q\lvert\operatorname{Cov}(G_p\zeta_p,G_q\zeta_q)\rvert=O(N_{\mathrm{w}})$ (uncorrelated martingale-difference increments are the special case, where the double sum collapses to $\sum_p\operatorname{Var}(G_p\zeta_p)$), so $\operatorname{Var}(\eta\sum_pG_p\zeta_p)=O(\eta^2N_{\mathrm{w}})$ (the random part of $\mathcal{E}$ contributes variance $O(N_{\mathrm{w}}^2\eta^4)$, lower order at $N_{\mathrm{w}}\sim1/\eta$) and the mean of $\Delta_{\mathrm{end}}$ is carried by $\mathcal{E}=O(N_{\mathrm{w}}\eta^2)$.  For (2), $\mathbb{E}\,\eta\sum_pG_p\zeta_p=\eta N_{\mathrm{w}}\mu+o(\eta N_{\mathrm{w}})$ by the assumed Ces\`aro convergence, and this dominates the $O(N_{\mathrm{w}}\eta^2)$ remainder whenever $\eta\to0$ with $N_{\mathrm{w}}\eta$ held fixed.  For (3), substituting the matched-clock per-window coefficient, which is in the regular $O(\eta^2)$ class rather than the singular $O(\eta)$ class, multiplies every term by an additional factor of $\eta$.
\end{proof}

On a real model the coherent case exhibits clean linear-in-$N_{\mathrm{w}}$ drift only while the local expansion holds, after which the warm state moves enough to require recomputation.  An empirical study would therefore test the accumulation \emph{mode}---sublinear spread under reshuffling versus linear drift under coherent order---rather than literal $O(1)$ separation at full training budget.  The summable-response hypothesis is itself an idealization---it holds under contraction or geometric mixing of the suffix maps, but not under mere non-expansiveness (where the $O(\eta q)$ prefix accumulation already pushes the remainder to $O(N_{\mathrm{w}}^2\eta^2)$), and training-map Jacobians can in any case exceed unit norm.  The diffusion-versus-drift dichotomy is therefore a linearized first-variation accounting valid in the regime $N_{\mathrm{w}}\eta\ll1$ where the per-window local expansion holds.

\section{Experimental Protocol Details}
\label{app:protocol}

\paragraph{Rows and aggregation.}
A measurement row is a tuple consisting of a model checkpoint, domain pair, seed, block length $K$, and the corresponding warmed optimizer state.  The homogeneous second-order local-validity analysis contains 192 model-pair-block rows from 48 model-block JSONL files and 10 checkpoints.  The 192 rows are the measurement rows in the released artifact, not a filtered subset of a larger second-order table.  The breakdown is 160 Qwen/Pythia rows, 16 Llama-3.2-3B rows, and 16 Gemma-2-2B rows.  The Qwen/Pythia checkpoints are Qwen-3-0.6B, Qwen-3-1.7B, Qwen-3-4B, Qwen-3-14B, Pythia-70M, Pythia-160M, Pythia-410M, and Pythia-1B.  Main-text fits are reported both at the row level and after aggregating pairs into model/K medians.  Model/K medians reduce the influence of denominator singularities in individual pairs, while row-level fixed-effect fits preserve pair-level variation.

\paragraph{Coefficient/outcome separation.}
Coefficient rows are generated separately from realized-outcome analysis.  Each coefficient row contains the six curvature contractions $M_{A,A},M_{A,B},M_{B,A},M_{B,B},M_{A,0},M_{B,0}$, the coefficient-weighted pieces $\Delta^{(1)}_{\mathrm{grad}}$ and $\Delta^{(1)}_{m_0}$, and their sum $\Delta^{(1)}$.  Coefficient rows are joined to realized AB/BA outcomes only for analysis.  Generating coefficient rows separately avoids tuning the correction after inspecting live residuals.

\paragraph{Fixed-target triple protocol.}
The transitivity-radius experiment (Section~\ref{sec:sorting-curl}) uses Llama-3.2-1B with fp32 base weights and fp32 LoRA adapters; the driver aborts on any non-fp32 configuration because bf16 base noise can dominate antisymmetric cycle-driving differences.  Seven Pile domains (math, code, news, stackexchange, wikipedia, legal, biomedical) are loaded with disjoint 512/512 train/eval slices per domain.  Each domain serves in turn as the fixed target: its evaluation-slice covector $g_E$ is computed once and shared by every pairwise edge of every triple among the remaining six source domains, which is the condition under which triangle-cycle structure is well defined.  After $100$ AdamW warmup steps cycling all domains, the warmed preconditioner $P_0$ and first moment $m_0$ are frozen, and one HVP per source domain $h_D=H_D(P_0g_E)$ yields all contractions $M_{D,j}=\inner{h_D}{P_0g_j}$ by Hessian symmetry, so the curvature cost is linear rather than quadratic in the number of sources.  Edges are evaluated at $K\in\{16,24,32,48,64\}$ through the finite-clock coefficients alone, with gradients and HVPs averaged over $16$ batches of $4$ sequences at length $512$.  Each (triple,$K$) cell records the directed-3-cycle indicator for the computed frozen-$P$ coefficient edge field and, separately, for the potential-only and bracket-only components, plus the orientation-aware bracket cycle term.  The protocol reserves live AB/BA spot comparisons from a bit-identical fork for selected high-$\rho_{\mathrm{curl}}$ triples and matched low-$\rho_{\mathrm{curl}}$ controls; those comparisons test regime behavior rather than validating each cyclic cell.  The protocol, including scope boundaries, is included in the artifact as \texttt{configs/fixed\_target\_triples\_protocol.json}.

\paragraph{Ordering-orbit and variance-scaling protocol.}
The order-variance experiments behind Theorem~\ref{thm:variance-floor}
(Section~\ref{sec:variance-floor}) run in three legs.  The orbit experiment measures eight
LoRA cells (Pythia-1B and Llama-3.2-1B, two domain pairs, fixed-clock AdamW and matched-clock
arms) with $64$ orderings per cell under bitwise-deterministic replay; per-cell determinism
and noise-floor criteria are enforced, and all rows satisfy those criteria.  The
$\eta$-scaling rerun repeats the two lowest-slope cells at $\eta=1.25\times10^{-5}$ against
the predicted slope and deficit values.  The variance-scaling experiment sweeps five
learning rates spanning $16\times$ in three optimizer arms (fixed-clock AdamW, matched-clock
momentum, SGD) across four model--pair series and reports Theil--Sen slopes of
$\log\operatorname{Var}$ versus $\log\eta$ with bootstrap $95\%$ confidence intervals over
series; per-cell $\phi$ records enable the $\sigma_\phi^2(\eta)$ decomposition reported in
Section~\ref{sec:variance-floor}.  The exclusion criteria and tolerances are listed with the protocol details in the artifact.

\paragraph{HVP implementation.}
For token-mean causal language-model losses, microchunking computes HVPs of unnormalized loss sums and divides by total valid tokens in the original outer batch.  The normalization gives the HVP of the token-mean objective rather than an equal-weight average over microchunks.  Representative chunked-versus-unchunked comparisons found few-percent drift in $\Delta^{(1)}$, so the analysis does not rely on sub-percent calibration.

\paragraph{Trust-region metrics.}
For each cell the analysis reports
\begin{equation}
R_{\tau,0}=\frac{\Delta_{\mathrm{live}}}{\tau_{\mathrm{Adam}}},\qquad
R_{\tau,0+1}=\frac{\Delta_{\mathrm{live}}}{\tau_{\mathrm{Adam}}+\Delta^{(1)}},
\end{equation}
additive residual reduction, and $\rho=|\Delta^{(1)}/\tau_{\mathrm{Adam}}|$.  The main trust-region figures use $\rho$ rather than the corrected ratio alone.  The corrected ratio can be misleading when $\tau_{\mathrm{Adam}}+\Delta^{(1)}$ crosses near zero; additive residuals ($\Delta_{\mathrm{live}}-\tau_{\mathrm{Adam}}$) are analyzed in Appendix~\ref{app:additional}.

\paragraph{Small denominators.}
Rows with very small $|\tau_{\mathrm{Adam}}|$ can have extremely large or unstable raw $\rho$.  Small-denominator rows are not discarded: a zero of the first-order replay coefficient is itself a genuine singularity of the local first-order calculus.  Such rows can dominate global model-size fits, so we report model/K medians as robustness summaries and treat local curvature prefactors and denominator zeros as the relevant scale variables.

\paragraph{Curvature prefactors.}
For fixed model/pair/seed, the measured curvature contractions are constant across $K$; the block-length dependence comes from the finite-clock coefficients.  The prefactor analysis therefore asks whether $\rho/K^2$ is explained by local curvature-block quantities such as $|M_{A,B}-M_{B,A}|$ and $|M_{A,A}-M_{B,B}|$.  The resulting comparison links the analytic coefficient scaling to the measured gradient/Hessian geometry.

\paragraph{Local order diagnostic.}
Algorithm~\ref{alg:diagnostic-app} summarizes the local order diagnostic implied by the expansion.  The diagnostic decides whether first-order reasoning about order is locally well conditioned before treating a schedule as a design variable.
\begin{algorithm}[h]
\caption{Local AdamW order diagnostic}
\label{alg:diagnostic-app}
\begin{algorithmic}[1]
\Require checkpoint and AdamW state, candidate domains $\{D_i\}$, evaluation covector $g_E$, block length $K$
\State Estimate local gradients $g_i$ and frozen preconditioner $P_0$
\State Compute first-order potentials $\phi_i=\inner{g_E}{P_0g_i}$
\State Form the finite-clock weights $W_{r,K}^{(s,\beta)}$ and the memory response $\tau_{\mathrm{Adam}}$
\State Estimate curvature contractions $M_{D,j}=\inner{g_E}{P_0H_DP_0g_j}$ and warmed-momentum terms when needed
\State Compute $\Delta^{(1)}$
\If{$|\tau_{\mathrm{Adam}}|$ is below the denominator tolerance}
    \State mark a first-order denominator singularity and do not use $\rho$ alone as a validity estimate
\Else
    \State Compute $\rho=|\Delta^{(1)}/\tau_{\mathrm{Adam}}|$
\If{$\rho\ll1$}
    \State use the first-order potential/sorting rule as a local approximation
\ElsIf{$\rho\approx1$}
    \State treat second-order transport as comparable to first-order replay and inspect the curvature-corrected local approximation
\Else
    \State treat the proposed order as outside the local trust region
\EndIf
\EndIf
\end{algorithmic}
\end{algorithm}

\paragraph{Held-out-NLL calibration.}
The held-out-NLL calibration in Section~\ref{sec:variance-floor} uses a near-canceling mix comparison: Llama-3.2-1B LoRA SFT, fp32 base, math/code data, 512 training steps, 25 order seeds per side, and a fixed mixed held-out NLL probe.  Configuration A is a 50/50 math--code mix and configuration B is a 70/30 mix.  The $\eta$ sweep keeps the mix definitions, example-pool construction, seed list, and readout fixed, and varies only AdamW's learning rate over $\{10^{-4},2\times10^{-4}\}$, with the banked $5\times10^{-5}$ run as the baseline.  The degeneracy criteria treat an $\eta$ point as uninterpretable if mean held-out NLL rises more than 0.15 above the baseline, mean last-epoch train loss rises more than 0.5, more than 20\% of rows are excluded, or $\sigma_{\mathrm{ord}}/\text{mean}>0.02$.  Both eta-sweep points satisfy these criteria without exclusions.  The reported sign-change rate is the fraction of the $25\times25$ cross-seed A/B pairs whose held-out-NLL sign disagrees with the mean-gap sign at that eta.  These pairs reuse the same 25 per-side seeds, so the rate is a descriptive grid readout over dependent comparisons rather than an independent-sample statistic.

\section{Additional Results}
\label{app:additional}

\subsection{Residual-validation bins}
Table~\ref{tab:resid-bins-app} gives the stratified residual-validation statistics behind Figure~\ref{fig:residual-validation}.  The target is the live residual $\Delta_{\mathrm{live}}-\tau_{\mathrm{Adam}}$; the predictor is $\Delta^{(1)}$.

\begin{figure}[t]
\centering
\includegraphics[width=0.78\linewidth]{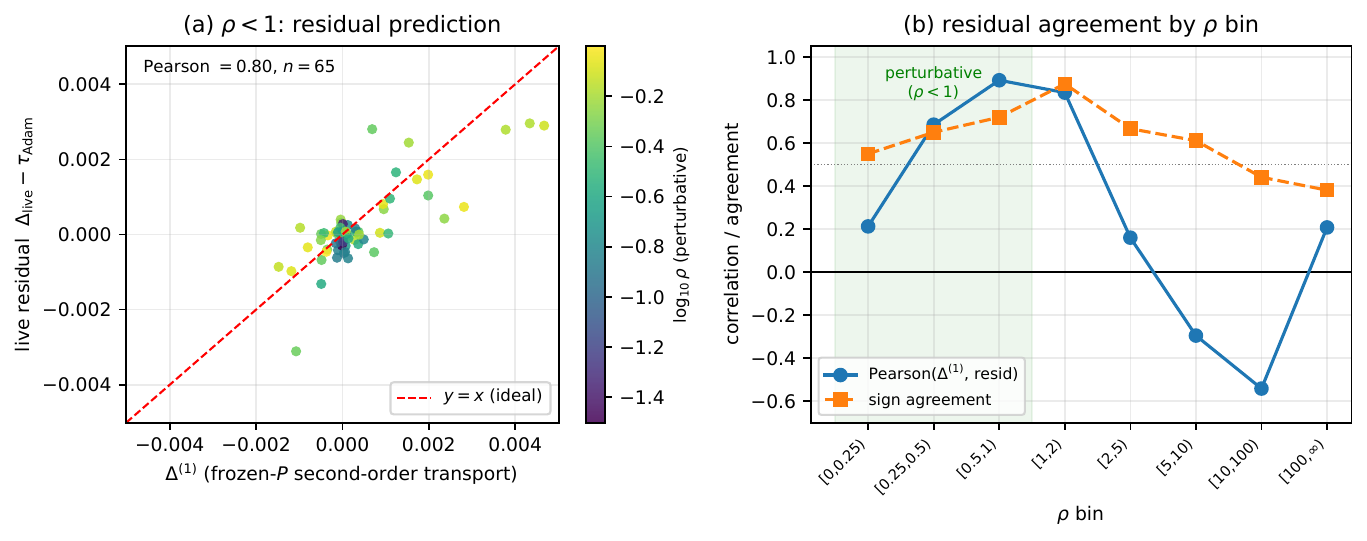}\\[1.0ex]
\includegraphics[width=0.78\linewidth]{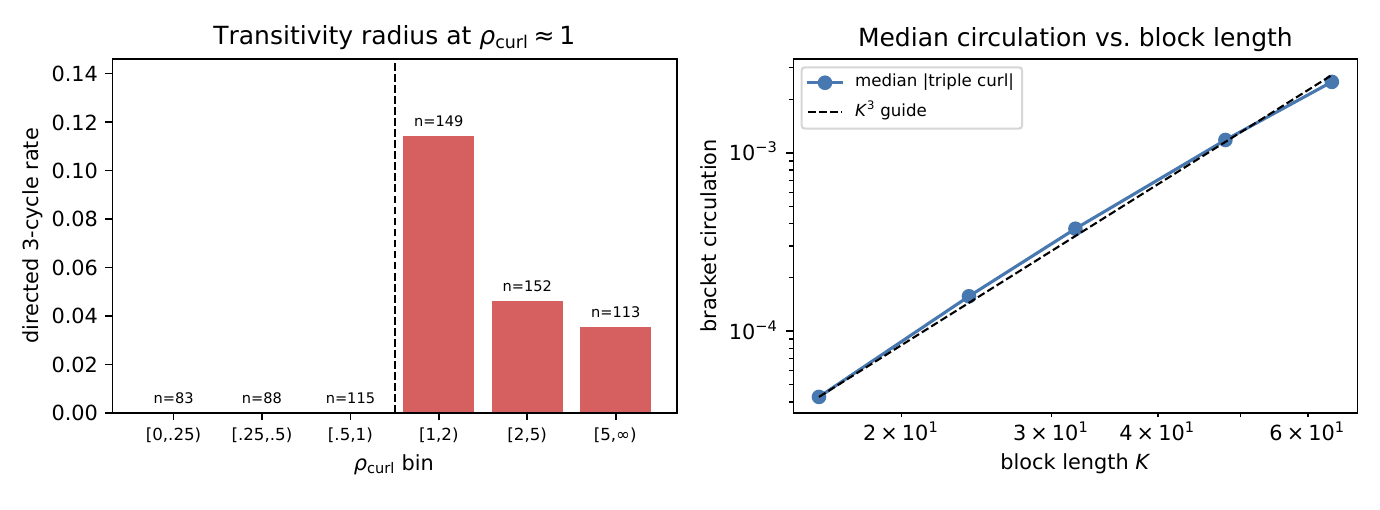}
\caption{Local validity and scalar sortability.  Top left: in the $\rho<1$ subset, $\Delta^{(1)}$ tracks the live residual $\Delta_{\mathrm{live}}-\tau_{\mathrm{Adam}}$; top right: binning by $\rho$ shows strongest agreement in the intermediate low-$\rho$ bins, a noise-limited very-low-$\rho$ bin, and deterioration at larger ratios.  Bottom left: in the fixed-target coefficient experiment, the frozen-$P$ edge field has zero directed 3-cycles when every edge satisfies $\rho_{\mathrm{curl}}<1$, and a finite cycle rate above the radius.  Bottom right: median bracket circulation grows with block length with an empirical $K^3$ guide.}
\label{fig:residual-validation}
\end{figure}

\begin{table}[h]
\centering
\small
\caption{Residual validation stratified by $\rho$.}
\label{tab:resid-bins-app}
\begin{tabular}{lrrrrr}
\toprule
$\rho$ bin & $n$ & slope & Pearson & Spearman & sign \\
\midrule
$[0,0.25)$ & 20 & 0.40 & 0.21 & 0.17 & 55\% \\
$[0.25,0.5)$ & 20 & 1.07 & 0.69 & 0.66 & 65\% \\
$[0.5,1)$ & 25 & 0.60 & 0.89 & 0.86 & 72\% \\
$[1,2)$ & 24 & 0.78 & 0.83 & 0.90 & 88\% \\
$[2,5)$ & 30 & 0.06 & 0.16 & 0.45 & 67\% \\
$[5,10)$ & 18 & -0.05 & -0.30 & 0.23 & 61\% \\
$[10,100)$ & 34 & -0.02 & -0.54 & -0.18 & 44\% \\
$[100,\infty)$ & 21 & 0.00 & 0.21 & -0.04 & 38\% \\
\midrule
$\rho<1$ & 65 & 0.68 & 0.80 & 0.65 & 65\% \\
$\rho\ge5$ & 73 & 0.00 & -0.04 & -0.14 & 47\% \\
\bottomrule
\end{tabular}
\end{table}

\noindent Prediction quality is strongest for $\rho\lesssim2$ (slope toward one, Pearson up to $0.89$, sign agreement rising to $88\%$) and falls off or changes sign for $\rho\ge5$.

\subsection{Baseline table}
Table~\ref{tab:baseline-app} reports the baseline predictors behind the diagnostic summary in Section~\ref{sec:experiments}.  Spearman uses standard tied ranks, which matters for the discrete block-length predictor $K$.

\begin{table}[h]
\centering
\small
\caption{Predictors of first-order approximation error on 192 rows.}
\label{tab:baseline-app}
\begin{tabular}{lrr}
\toprule
Predictor & Spearman with $|R_{\tau,0}-1|$ & AUC sign error \\
\midrule
$\rho$ & 0.629 & 0.790 \\
$g_A$ norm & 0.345 & 0.657 \\
$|\Delta^{(1)}|$ & 0.299 & 0.691 \\
$g_B$ norm & 0.297 & 0.701 \\
curvature norm & 0.273 & 0.677 \\
curvature cross-asymmetry & 0.186 & 0.650 \\
$K$ & 0.170 & 0.565 \\
$|\tau_{\mathrm{Adam}}|$ & 0.017 & 0.590 \\
\bottomrule
\end{tabular}
\end{table}

\subsection{Clustered uncertainty and denominator controls}
\label{app:stat-controls}
Table~\ref{tab:cluster-ci-app} reports clustered bootstrap intervals for the key diagnostics.  The rows are clustered either by model/pair/block file or by model/pair, rather than treated as independent scalar observations.  The conclusions are unchanged: $\rho$ remains a strong trust-region and sign-error predictor, and $\Delta^{(1)}$ remains predictive of the live residual in the low-$\rho$ regime.

\begin{table}[h]
\centering
\small
\caption{Clustered uncertainty and denominator controls on the 192-row diagnostic grid.  Intervals are percentile bootstrap 95\% CIs.}
\label{tab:cluster-ci-app}
\begin{tabular}{lccc}
\toprule
Statistic & point & file-cluster CI & model/pair CI \\
\midrule
$\rho$ Spearman with $|R_{\tau,0}-1|$ & 0.629 & [0.53, 0.71] & [0.50, 0.73] \\
$\rho$ AUC for sign error & 0.790 & [0.71, 0.86] & [0.70, 0.86] \\
$\Delta^{(1)}$ Pearson with residual, $\rho<1$ & 0.797 & [0.63, 0.90] & [0.65, 0.93] \\
OLS slope, residual on $\Delta^{(1)}$, $\rho<1$ & 0.684 & [0.56, 0.78] & [0.52, 0.81] \\
$\rho$ Spearman with $|\Delta_{\mathrm{live}}-\tau|$ & 0.619 & [0.47, 0.70] & [0.45, 0.76] \\
$|\Delta^{(1)}|$ Spearman with $|\Delta_{\mathrm{live}}-\tau|$ & 0.851 & [0.76, 0.89] & [0.76, 0.90] \\
partial Spearman: $\rho$ vs. $|R_{\tau,0}-1|\mid |\tau|$ & 0.671 & [0.57, 0.75] & [0.52, 0.78] \\
partial Spearman: $\rho$ vs. $|\Delta_{\mathrm{live}}-\tau|\mid |\tau|$ & 0.573 & [0.48, 0.65] & [0.44, 0.67] \\
\bottomrule
\end{tabular}
\end{table}

The normalized magnitude target $|R_{\tau,0}-1|$ shares the denominator $\tau_{\mathrm{Adam}}$ with $\rho$.  The last two rows therefore residualize tied ranks against $|\tau_{\mathrm{Adam}}|$.  The ratio remains predictive under this partial-rank control.  For raw residual magnitude, the numerator $|\Delta^{(1)}|$ is naturally strongest; we use $\rho$ for its intended role as a normalized trust-region and sign-error diagnostic.

\subsection{Finite-window exponent uncertainty}
\label{app:eta-cis}
Table~\ref{tab:eta-ci-app} gives cluster bootstrap intervals for the $\eta$-slope experiment, resampling domain-pair/seed clusters.  The fixed-clock optimizer arms are near-linear but not exactly unit-slope over the finite tested window: AdamW is slightly below one and Lion slightly above one.  The finite-window spread is compatible with the asymptotic fixed-clock theorem, which asserts a nonzero first-order coefficient rather than an exact finite-window exponent of one for every optimizer implementation.

\begin{table}[h]
\centering
\small
\caption{Finite-window $\eta$-slope fits with pair/seed clustered CIs.}
\label{tab:eta-ci-app}
\begin{tabular}{llcc}
\toprule
Model & optimizer & slope $p$ & 95\% CI \\
\midrule
Pythia-1B & SGD & 2.005 & [1.905, 2.014] \\
Pythia-1B & matched-clock & 1.973 & [1.969, 1.973] \\
Pythia-1B & AdamW & 0.910 & [0.870, 0.956] \\
Pythia-1B & Lion & 1.072 & [1.052, 1.083] \\
Llama-3.2-1B & SGD & 1.960 & [1.955, 2.067] \\
Llama-3.2-1B & matched-clock & 1.968 & [1.963, 1.970] \\
Llama-3.2-1B & AdamW & 0.915 & [0.867, 0.967] \\
Llama-3.2-1B & Lion & 1.094 & [1.079, 1.111] \\
\bottomrule
\end{tabular}
\end{table}

\subsection{Finite-clock coefficient scaling}
Table~\ref{tab:kernel-coeff-app} reports the finite-$K$ coefficient scales used to interpret the block-length law.  The experiments use exact finite-$K$ coefficients; the $K^2$ notation is a regime summary.

\begin{table}[h]
\centering
\small
\caption{Finite-clock first- and second-order coefficient scales at $\beta_1=0.9,s=100$.  Here $\max|C^{(2)}|$ is the largest gradient-transport coefficient ($C_{ff},C_{ss},C_\times$ of Appendix~\ref{app:edge-decomp}); the warm-buffer coefficients $C_{m,f},C_{m,s}$ (carrying the $m_0$ term) are excluded.}
\label{tab:kernel-coeff-app}
\begin{tabular}{rrrr}
\toprule
$K$ & $Kc_K$ & $\max |C^{(2)}|$ & $\max |C^{(2)}|/(Kc_K)$ \\
\midrule
16 & 29.2 & 11.5 & 0.39 \\
24 & 46.3 & 42.0 & 0.91 \\
32 & 59.7 & 100.7 & 1.69 \\
48 & 76.2 & 316.9 & 4.16 \\
64 & 83.9 & 671.5 & 8.00 \\
128 & 89.8 & 3431.6 & 38.22 \\
256 & 90.0 & 15142.0 & 168.25 \\
\bottomrule
\end{tabular}
\end{table}

\noindent The second-order/first-order ratio crosses one between $K=24$ (ratio $0.91$) and $K=32$ (ratio $1.69$) and grows steeply thereafter, so no single block length is universally safe.

\subsection{Eta-slope protocol and matched-clock control}
\label{app:eta-slope-protocol}
The $\eta$-slope experiment uses Pythia-1B and Llama-3.2-1B, fp32 base models, $K=16$, three domain pairs, three seeds, three deterministic replicates, and five learning rates.  The matched-clock linear-memory arm in Figure~\ref{fig:eta-slope} is not ordinary fixed-$\beta$ heavy-ball momentum.  The matched-clock arm is a deliberately regularized first-moment control:
\begin{equation}
    m_{t+1}=\beta(\eta)m_t+(1-\beta(\eta))g_{w_t}(\theta_t),
    \qquad \beta(\eta)=\exp(-a\eta),
\end{equation}
\begin{equation}
    \widehat m_{t+1}=\frac{m_{t+1}}{1-\beta(\eta)^{s_0+t+1}},
    \qquad s_0=T_c/\eta,
    \qquad \theta_{t+1}=\theta_t-\eta\widehat m_{t+1}.
\end{equation}
Thus the bias-correction clock is matched to the continuous-time training variable $\tau=\eta k$, and the EMA window scales like $1/\eta$ steps.  Here $a$ and $T_c$ are $\eta$-independent constants ($a=52.36$, $T_c=5.0$ in our runs); the per-cell realized decay $\beta(\eta)=e^{-a\eta}$ and bias clock $s_0=T_c/\eta$ are logged in every matched-clock row's metadata in the artifact.  The continuous-time averaging window $1/a$ is held fixed across the sweep, while its step-equivalent $1/(a\eta)$ \emph{grows} as $\eta\to0$; this growth is precisely what flattens the impulse-weight spread to $O(\eta)$ and returns the variance to regular $O(\eta^4)$ scaling.  Under this matched-clock scaling the first-order impulse weights flatten in the $\eta\to0$ limit for equal-multiset words, restoring the regular $O(\eta^2)$ commutator regime.  The matched-clock control shows that the observed $p\approx1$ behavior is not a generic artifact of taking larger parameter steps; the behavior is a property of fixed-clock state.  Plain fixed-$\beta$ momentum, unlike this matched-clock control, is predicted to remain in the first-order class.  Measured exponents and clustered CIs are in Table~\ref{tab:eta-ci-app}.  Lion is a nonsmooth empirical comparison for fixed-clock state; the formal closed-form transport coefficient is the AdamW/frozen-$P$ theory.

\subsection{Qwen-3-8B first-order sweep}
A separate Qwen-3-8B first-order sweep measured $c_K$ behavior but did not contain the second-order fields required to compute $\rho$.  The Qwen-3-8B sweep is therefore not counted in the homogeneous 192-row second-order diagnostic grid.  We use the 192-row grid for all $\rho$, residual-validation, baseline, and conservativity-gap summaries.
\subsection{Model size is not the local scale variable}
Pythia and Qwen show that parameter count alone is not a stable scale variable for this local diagnostic.  Small Pythia checkpoints are high-$\rho$ throughout much of the range, while the 1B checkpoint enters a more perturbative regime; Qwen-family medians are nonmonotone after including Qwen-14B.  We therefore treat local curvature prefactors and denominator zeros as the relevant scale variables.

\subsection{Edge-level motivation for fixed-target triples}
\label{app:curl-motivation}
Equation~\eqref{eq:edge-decomp-main} identifies a cycle-driving bracket component inside the frozen-$P$ transport coefficient.  The existing 192-row grid is not a valid directed-cycle dataset because each pair row uses its own pair-specific evaluation covector.  Nevertheless, those rows can measure whether the coefficient-weighted bracket term is large enough to justify a fixed-target triple experiment.  Define the second-order-only bracket share
\begin{equation}
q_{\mathrm{curl}}(i,j)=\frac{|b_K(M_{ji}-M_{ij})|}{|b_K(M_{ji}-M_{ij})|+|a_K(M_{ii}-M_{jj})|+|c_K(M_{i0}-M_{j0})|+\epsilon}.
\end{equation}
Across the 192 pairwise rows, the median $q_{\mathrm{curl}}$ is $0.25$, the 90th percentile is $0.69$, and the maximum is $0.90$.  Using the transitivity denominator in Eq.~\eqref{eq:rhocurl}, the fraction of edges with $\rho_{\mathrm{curl}}\ge1$ rises with block length: $0.18,0.18,0.30,0.35,0.50$ for $K=16,24,32,48,64$.  These pairwise numbers do not measure directed cycles; they only show that the clock-weighted bracket component is not negligible in the same measured cells.  A valid cycle-rate test must hold the target covector fixed across all edges of each domain triple.  The released GPU protocol therefore computes a common $g_E$, one HVP $H_D(P_0g_E)$ per source domain, all $M_{D,j}$ values by dot products, and then evaluates cycle rate as a function of $K$ and $\rho_{\mathrm{curl}}$.

\subsection{Additional order-orbit details}
\label{app:orbit-details}
The main text reports the ordering-orbit result as a mechanism measurement rather than as a large table.  The detailed cell-level values (Table~\ref{tab:orbit-details-app}) are retained here because they are useful for evaluating the frozen-$P$ scope.  At $\eta=5\times10^{-5}$, fixed-clock AdamW shows high ordering-by-ordering agreement between live readout and the closed-form first-order kernel, with slopes below one due to the measured $\eta^3$ cross-term in Theorem~\ref{thm:variance-floor}.

\paragraph{Scope of the variance floor.}\label{app:floor-scope} Writing the order variance as $\eta^2 \mathcal{A}_K+\eta^3 \mathcal{B}_K+O(\eta^4)$, with leading coefficient $\mathcal{A}_K=\frac{K}{K-1}\sigma_\phi^2 V_K$ from Eq.~\eqref{eq:variance-floor} and $\mathcal{B}_K$ the $\eta^3$ transport covariance, whenever the correction is perturbative---$|\eta^3 \mathcal{B}_K|\le\gamma\,\eta^2 \mathcal{A}_K$ with $\gamma<1$ for all $\eta\le\eta_0$---the order variance obeys $\operatorname{Var}_w\ge(1-\gamma)\,\eta^2 \mathcal{A}_K$, an $\eta^2$ lower bound with no counterpart for the regular $O(\eta^4)$ optimizers.  The floor is not a pointwise bound for arbitrary finite $\eta$: because $\mathcal{B}_K$ can be negative, individual finite-$\eta$ cells can fall below the asymptotic line, which is the source of the sub-one ordering slopes above.
\begin{table}[h]
\centering
\small
\caption{Ordering-orbit agreement by cell, 64 deterministic orderings per cell.}
\label{tab:orbit-details-app}
\begin{tabular}{lcc}
\toprule
Cell & slope & correlation \\
\midrule
Pythia-1B math--code & 0.801 & 0.976 \\
Pythia-1B code--dialogue & 0.622 & 0.857 \\
Llama-3.2-1B math--code & 0.759 & 0.891 \\
Llama-3.2-1B code--dialogue & 0.758 & 0.754 \\
\bottomrule
\end{tabular}
\end{table}

The matched-clock cells return slope and correlation equal to one within deterministic replay tolerance, confirming that the same ordering harness preserves the kernel when the memory clock is regularized.  At the locked learning rate, the fixed-clock orbit variance is much larger than the matched-clock control (Table~\ref{tab:matched-suppression-app}):
\begin{table}[h]
\centering
\small
\caption{Orbit-variance suppression by the matched-clock control.  This magnitude is dominated by the missing AdamW preconditioner---the $\sigma_\phi^2$ ratio---rather than the clock, so we do not read it as a clock effect; the clock signature is the variance-slope class separation, not this suppression magnitude.}
\label{tab:matched-suppression-app}
\begin{tabular}{lc}
\toprule
Cell & $\operatorname{Var}[\mathrm{AdamW}]/\operatorname{Var}[\mathrm{matched\ clock}]$ \\
\midrule
Pythia-1B math--code & $3.9\times10^8$ \\
Pythia-1B code--dialogue & $1.9\times10^8$ \\
Llama-3.2-1B math--code & $1.6\times10^7$ \\
Llama-3.2-1B code--dialogue & $6.4\times10^5$ \\
\bottomrule
\end{tabular}
\end{table}

Two lower-$\eta$ reruns were used as a remainder comparison.  Llama code--dialogue matched the frozen prediction closely: slope $0.935$ versus predicted $0.9395$, and deficit scaling $0.269$ versus predicted $0.25$.  Pythia code--dialogue preserved the kernel correlation ($0.987$) but retained an $\eta$-independent amplitude factor of about $0.74$, consistent with the full-$v$ renormalization channel rather than a breakdown of the first-order clock law.  In the variance-scaling sweep, the aggregate AdamW variance exponent is below the fixed-$\sigma_\phi$ ideal value of two because $\sigma_\phi^2$ itself drifts with learning rate; using the measured $d\log\sigma_\phi^2/d\log\eta\approx -0.69$ predicts a variance exponent near $1.31$, close to the fitted $1.20$.

\subsection{Direct fixed-state replay}
\label{app:fixed-state-replay}
The warmed variance sweep in the main text changes both the step size and the state reached by warmup, so the measured input $\sigma_\phi^2(\eta)$ drifts with $\eta$.  The direct fixed-state replay removes that confound.  We warm a Pythia-1B math--code cell once, fork the full optimizer state $(\theta,m,v,s)$, and replay the same $K=16$ ordering orbit from that fork at each learning rate.  Because the local potentials are measured at a single state, raw and $\sigma_\phi^2$-clamped slopes are identical to three decimals.  The measured order-orbit variance slopes are given in Table~\ref{tab:fixed-state-replay}.
\begin{table}[t]
\centering
\small
\caption{Direct fixed-state replay: order-orbit variance slopes from a single warmed Pythia-1B math--code cell with the optimizer state $(\theta,m,v,s)$ forked once and the same $K=16$ ordering orbit replayed at each learning rate.  The buffered arms scale near $\eta^2$ and buffer-free SGD near $\eta^4$.}
\label{tab:fixed-state-replay}
\begin{tabular}{lc}
\toprule
Arm & variance slope \\
\midrule
AdamW & $1.830$ \\
fixed-$\beta$ momentum & $1.999$ \\
SGD & $3.998$ \\
\bottomrule
\end{tabular}
\end{table}
These slopes give the fixed-state version of the class split: a non-adaptive fixed buffer has variance exponent two, buffer-free SGD has variance exponent four, and AdamW is in the first-order variance class with a small preconditioner-path remainder.  The artifact reproduces these values from the fixed-state replay records.

\subsection{Additional optimizer tests}
\label{app:additional-optimizers}
The closed-form floor is the AdamW frozen-$P$ specialization.  The broader structural result is that fixed-clock optimizer state can move equal-multiset order into the first-order class when its first-order replay coefficient is nonzero.  Muon falls under that structural criterion because its orthogonalized momentum is consumed on an $\eta$-independent clock.  Its measured variance slope was $1.56$ with bootstrap interval $[1.31,1.82]$, placing it in the first-order class.  A parallel Lion variance test returned $2.87$ with interval $[2.31,3.94]$, which is inconclusive at this sample size; Lion is used in the main text only as a mean-level fixed-clock exponent comparison.

\subsection{Fixed-target triple controls}
\label{app:triple-controls}
The fixed-target triangle experiment is included to test the scalar-sortability geometry.  Under the common target covector, the conservative frozen-$P$ component has zero directed 3-cycles in all 700 tested cells.  The bracket-only field is cyclic on 7.1\% of triples, and the full edge field is cyclic only where the bracket is large enough to cross the radius.  Below $\rho_{\mathrm{curl}}<1$, the full field has 0/286 directed 3-cycles.

Optimizer-attribution arms are retained as scope controls.  The matched-clock momentum control is potential-dominated on every tested edge ($\rho_{\mathrm{curl}}\le1.5\times10^{-4}$ across all 700 cells) and has zero cycles at every $K$.  Plain SGD has vanishing potential component in this decomposition, leaving a degenerate all-curl edge field with a $K$-invariant cycle set.  The matched-clock and SGD controls support the interpretation that fixed-clock AdamW's finite-clock potential moves edges across the radius; the underlying bracket geometry exists more broadly, but the clock determines whether it is expressed in realized pairwise order preferences.

\subsection{Seed-count worked example}
\label{app:seed-example}
For the Pythia-1B math--code orbit at $\eta=5\times10^{-5}$ and $K=16$, the measured AdamW orbit variance in LoRA-delta readout units is $3.0\times10^{-8}$, so $\sigma_{\mathrm{ord}}\approx1.73\times10^{-4}$.  A comparison gap of $\Delta=2\sigma_{\mathrm{ord}}\approx3.5\times10^{-4}$ therefore needs about four independent shuffle seeds per side at 95\% level and 80\% power by Eq.~\eqref{eq:seed-count}; a gap of one order-noise standard deviation needs about sixteen.  The example is illustrative rather than universal: practitioners should compute $V_K$ for their optimizer clock and estimate $\sigma_\phi$ or run a small ordering pilot for their own setting.

\end{document}